\documentclass{article}

\usepackage[preprint]{style_file}

\usepackage[utf8]{inputenc} 
\usepackage[T1]{fontenc}    
\usepackage{hyperref}       
\usepackage{url}            
\usepackage{booktabs}       
\usepackage{amsfonts}       
\usepackage{nicefrac}       
\usepackage{microtype}      
\usepackage{xcolor}         
\newcommand{\squeeze}{\textstyle}

\usepackage{multirow}
\usepackage{layout}

\usepackage{amsmath}
\usepackage{amsthm}
\usepackage{amssymb}
\usepackage{amsfonts}
\usepackage{mathtools}
\usepackage{siunitx}

\usepackage{bbm}

\usepackage{url}
\usepackage{color}
\usepackage{graphicx}
\usepackage{algorithmic}
\usepackage{algorithm}
\usepackage{verbatim}
\usepackage{subcaption}

\usepackage{apptools}
\usepackage{thmtools}



\newcommand{\norm}[1]{\left\| #1 \right\|}

\newcommand{\inp}[2]{\left\langle#1,#2\right\rangle} 


\newcommand{\cL}{\mathcal{L}}
\newcommand{\cO}{\mathcal{O}}

\newcommand{\mA}{\mathbf{A}}

\newcommand{\mI}{\mathbf{I}}


\newcommand{\del}[1]{}

\newcommand{\R}{\mathbb{R}} 

\newcommand{\eqdef}{:=} 

\newcommand{\Prob}{\mathbf{Prob}} 


\newcommand{\Exp}[1]{{\rm E}\left[#1\right]}

\newcommand{\ExpSub}[2]{{\rm E}_{#1}\left[#2\right]}









\usepackage{tcolorbox}
\usepackage{pifont}
\definecolor{mydarkgreen}{RGB}{39,130,67}
\definecolor{mydarkred}{RGB}{192,47,25}
\definecolor{mydarkorange}{RGB}{235,105,0}
\newcommand{\green}{\color{mydarkgreen}}

\newcommand{\algname}[1]{{\green\small \sf #1}}

\newcommand{\samplingname}[1]{{\textit{#1}}}


\newtheorem{assumption}{Assumption}
\newtheorem{lemma}{Lemma}

\newtheorem{theorem}{Theorem}

\newtheorem{example}{Example}

\theoremstyle{plain}

\theoremstyle{definition}

\newtheorem{definition}[theorem]{Definition}


\usepackage[textsize=tiny]{todonotes}
\usepackage{footnote}
\usepackage{mathtools}
\usepackage[flushleft]{threeparttable}
\usepackage{etoolbox}
\usepackage{wrapfig}

\newcommand{\normsq}[1]{\left\| #1 \right\|^2}
\newcommand{\inner}[2]{\left< #1 , #2 \right>}

\newcommand{\samplefunc}{{\bf S}}

\usepackage{xcolor}
\hypersetup{
  colorlinks   = true, 
  urlcolor     = blue, 
  linkcolor    = {red!75!black}, 
  citecolor   = {blue!50!black} 
}

\allowdisplaybreaks

\title{Sharper Rates and Flexible Framework for \\ Nonconvex SGD with Client and Data Sampling}

\author{%
  Alexander Tyurin \thanks{Corresponding author: \texttt{alexandertiurin@gmail.com}.}\\
  KAUST \thanks{King Abdullah University of Science and Technology, Thuwal, Saudi Arabia}\\
  \And
  Lukang Sun\\
  KAUST\\
  \And
  Konstantin Burlachenko\\
  KAUST\\
  \And
  Peter Richt\'{a}rik \\
  KAUST\\
}

\begin{document}

\maketitle

\begin{abstract}
  We revisit the classical problem of finding an approximately stationary point of the average of $n$ smooth and possibly nonconvex functions. 
The optimal complexity of stochastic first-order methods in terms of the number of gradient evaluations of individual functions is $\mathcal{O}\left(n + n^{1/2}\varepsilon^{-1}\right)$, attained by the optimal SGD methods \algname{SPIDER} \citep{SPIDER} and \algname{PAGE} \citep{PAGE}, for example, where $\varepsilon$ is the error tolerance. However, i) the big-$\mathcal{O}$ notation hides crucial dependencies on the smoothness constants associated with the functions, and ii) the rates and theory in these methods assume simplistic sampling mechanisms that do not offer any flexibility. In this work we remedy the situation. First, we generalize the \algname{PAGE} algorithm so that it can provably work with virtually any (unbiased) sampling mechanism. This is particularly useful in federated learning, as it allows us to construct and better understand the impact of various combinations of client and data sampling strategies. Second, our analysis is sharper as we make explicit use of certain novel inequalities  that capture the intricate interplay between the smoothness constants and the sampling procedure. Indeed, our analysis is better even for the simple sampling procedure analyzed in the \algname{PAGE} paper. However, this already improved bound can be further sharpened by a different sampling scheme which we propose. In summary, we provide the most general and most accurate analysis of optimal SGD in the smooth nonconvex regime. Finally, our theoretical findings are supposed with carefully designed experiments.
\end{abstract}

\section{Introduction}
In this paper, we consider the minimization of the average of $n$ smooth functions \eqref{eq:main_problem} in the nonconvex settingin the regime when the number of functions $n$ is very large. In this regime, calculation of the exact gradient can be infeasible and the classical gradient descent method (\algname{GD}) \citep{nesterov2018lectures} can not be applied. The structure of the problem is generic, and such problems arise in many applications, including machine learning \citep{bishop2006pattern} and computer vision \citep{goodfellow2016deep}. Problems of this form are the basis of empirical risk minimization (ERM), which is the prevalent paradigm for training supervised machine learning models.

\subsection{Finite-sum optimization in the smooth nonconvex  regime}
We consider the finite-sum optimization problem
\begin{align}
  \label{eq:main_problem} 
\squeeze  \min \limits_{x \in \R^d}\left\{f(x) \eqdef \frac{1}{n}\sum \limits_{i=1}^n f_i(x)\right\},
\end{align}
where $f_i\,:\,\R^d \rightarrow \R$ is a smooth (and possibly nonconvex) function for all $i \in [n] \eqdef \{1, \dots, n\}.$ We are interested in randomized algorithms that find an $\varepsilon$-stationary point of \eqref{eq:main_problem} by returning a random point $\widehat{x}$ such that $\Exp{\norm{\nabla f(\widehat{x})}^2} \leq \varepsilon.$ The main efficiency metric of gradient-based algorithms for finding such a point is the (expected) number of gradient evaluations $\nabla f_i$; we will refer to it as the \textit{complexity} of an algorithm. 

\subsection{Related work}

The area of algorithmic research devoted to designing methods for solving the ERM problem \eqref{eq:main_problem} in the smooth nonconvex regime is one of the most highly developed and most competitive in optimization.

{\bf The path to optimality.} Let us provide a lightning-speed overview of recent progress. 
 The complexity of \algname{GD} for solving \eqref{eq:main_problem}  is $\cO\left(n\varepsilon^{-1}\right)$, but this was subsequently improved by more elaborate stochastic methods, including \algname{SAGA}, \algname{SVRG} and \algname{SCSG} \citep{SAGA, johnson2013accelerating, lei2017non, horvath2019nonconvex}, which enjoy the better complexity $\cO\left(n + n^{2/3} \varepsilon^{-1}\right)$. Further progress was obtained by methods such as \algname{SNVRG} and \algname{Geom-SARAH} \citep{zhou2018stochastic, horvath2020adaptivity},  improving the complexity to $\widetilde{\cO}\left(n + n^{1/2}\varepsilon^{-1}\right)$. Finally, the methods \algname{SPIDER}, \algname{SpiderBoost}, \algname{SARAH} and \algname{PAGE} \citep{SPIDER,wang2019spiderboost,SARAH,PAGE}, among others, shaved-off certain logarithmic factors and obtained the {\em optimal} complexity $\cO\left(n + n^{1/2}\varepsilon^{-1}\right)$, matching lower bounds \citep{PAGE}.

{\bf Optimal, but hiding a secret.} While it may look that this is the end of the road, the starting point of our work is the observation that {\em the big-$\cO$ notation in the above results hides important and typically very large data-dependent constants.}
For instance, it is rarely noted that the more precise complexity of \algname{GD} is $\cO\left(L_{-} n\varepsilon^{-1}\right),$ while the complexity of the optimal methods, for instance \algname{PAGE}, is $\cO\left(n + L_{+}n^{1/2}\varepsilon^{-1}\right),$ where $L_{-}\leq L_{+}$ are {\em different} and often {\em very large} smoothness constants. Moreover, it is easy to generate examples of problems (see Example~\ref{ex:lipt}) in which the ratio $L_{+}/ L_{-}$ is {\em as large one desires}. 

{\bf Client and data sampling in federated learning.} Furthermore, several modern applications, notably federated learning~\citep{konevcny2016federated,mcmahan2017communication}, depend on elaborate {\em client} and {\em data sampling} mechanisms, which are not properly understood. However, optimal \algname{SGD} methods were considered in combination with very simple mechanisms only, such as sampling a random function $f_i$ several times independently with replacement \citep{PAGE}. We thus believe that an in-depth study of sampling mechanisms for optimal methods will be of interest to the federated learning community. There exists prior work on analyzing  non-optimal \algname{SGD} variants with flexible mechanisms
For example, using the ``arbitrary sampling'' paradigm, originally proposed by \citet{richtarik2016parallel} in the study of randomized coordinate descent methods, \cite{horvath2019nonconvex} and \cite{qian2021svrg} analyzed \algname{SVRG}, \algname{SAGA}, and \algname{SARAH} methods, and showed that it is possible to improve the dependence of these methods on the smoothness constants via carefully crafted sampling strategies. Further, \citet{zhao2014accelerating} investigated the stratified sampling, but only  provided the analysis for vanilla \algname{SGD}, and in the convex case.

{\bf Our goal.} Motivated by the above considerations, in this paper we propose a  {\em general framework} for the study of a broad family of sampling mechanisms for, and provide a {\em refined analysis} of the optimal method \algname{PAGE} for solving problems \eqref{eq:main_problem} in the smooth nonconvex regime. 

\subsection{Summary of contributions}
\textbf{$\bullet$} In the original paper \citep{PAGE}, the optimal (w.r.t.\,$n$ and $\varepsilon$) optimization method \algname{PAGE} was analyzed with a simple uniform mini-batch sampling with replacement. We analyze \algname{PAGE} with virtually any (unbiased) sampling mechanism using a novel Assumption~\ref{ass:sampling}. Moreover, we show that some samplings can improve the convergence rate $\cO\left(n + L_{+} n^{1/2}\varepsilon^{-1}\right)$ of \algname{PAGE} (see Table~\ref{tbl:compl}). \\
\textbf{$\bullet$} We improve the analysis of \algname{PAGE} using a new quantity, the weighted Hessian Variance $L_{\pm}$ (or $L_{\pm,w}$), that is well-defined if the functions $f_i$ are $L_i$--smooth. We show that, when the functions $f_i$ are ``similar'' in the sense of the weighted Hessian Variance, \algname{PAGE} enjoys faster convergence rates (see Table~\ref{tbl:compl}). Also, unlike \citep{szlendak2021permutation}, we introduce weights $w_i$ that can play a crucial role in some samplings. Moreover, the experiments in Sec~\ref{sec:experiments} agree with our theoretical results.\\
\textbf{$\bullet$} Our framework is flexible and can be generalized to \textit{the composition of samplings}. These samplings naturally emerge in federated learning \citep{konevcny2016federated, mcmahan2017communication}, and we show that our framework can be helpful in the analysis of problems from federated learning.

\section{Assumptions}
\label{sec:assumptions}
In our analysis we rely on the following weak assumptions on the functions $f_i.$ 
\begin{assumption}
  \label{ass:lower_bound}
  There exists $f^* \in \R$ such that $f(x) \geq f^*$ for all $x \in \R^d$.
\end{assumption}
\begin{assumption}
  \label{ass:lipschitz_constant}
  There exists $L_{-}\geq 0$ such that $\norm{\nabla f(x) - \nabla f(y)} \leq L_{-} \norm{x - y}$ for all $x, y \in \R^d.$
\end{assumption}
\begin{assumption}
  \label{ass:local_lipschitz_constant}
  For all $i \in [n],$ there existsa constant $L_{i}>0$ such that $\norm{\nabla f_i(x) - \nabla f_i(y)} \leq L_{i} \norm{x - y}$ for all $x, y \in \R^d.$
\end{assumption}

\renewcommand{\arraystretch}{1.5}
\begin{table}
	\centering
	\caption{The constants $A$, $B,$ $w_i$ and $|\samplefunc|$ that characterize the samplings in Assumption~\ref{ass:sampling}.}
	\label{tbl:AB}
  \begin{threeparttable}
	\begin{tabular}{|lccccc|}
		\hline
		Sampling scheme & $A$ &$w_i$& $B$ & $|\samplefunc|$ & Reference  \\
		\hline
    \samplingname{\scriptsize Uniform With Replacement} & $\nicefrac{1}{\tau}$& $\nicefrac{1}{n}$& $\nicefrac{1}{\tau}$ & $\leq \tau$ &Sec.~\ref{subsec:3}\\
    \samplingname{\scriptsize Importance} & $\nicefrac{1}{\tau}$& $q_i$& $\nicefrac{1}{\tau}$ & $\leq \tau$ &Sec.~\ref{subsec:3} \\
		\samplingname{\scriptsize Nice} &$\frac{n-\tau}{\tau(n-1)}$ &$\nicefrac{1}{n}$ &$\frac{n-\tau}{\tau(n-1)}$ & $\tau$ &Sec.~\ref{subsec:0} \\
		\samplingname{\scriptsize Independent} &$\frac{1}{\sum_{i=1}^n\frac{p_i}{1-p_i}}$ & $\frac{\frac{p_i}{1-p_i}}{\sum_{i=1}^n\frac{p_i}{1-p_i}}$& $0$& $\sum_{i=1}^n p_i$ &Sec.~\ref{subsec:2} \\
    \samplingname{\scriptsize Extended Nice} & $\frac{n-\tau}{\tau(n-1)}$ &$\frac{l_i}{\sum_{i=1}^n l_i}$ &$\frac{n-\tau}{\tau(n-1)}$ & $\leq \tau$ &Sec.~\ref{subsec:1} \\
		\hline
	\end{tabular}
  \scriptsize Notation: $n = \#$ of data points; $\tau = $ batch size; $q_i = $ probability to sample $i$\textsuperscript{th} data point in the multinomial distribution; $p_i = $ probability to sample $i$\textsuperscript{th} data point in the bernoulli distribution; $l_i = \#$ of times to repeat $i$\textsuperscript{th} data point before apply the \samplingname{Nice} sampling.
  \end{threeparttable}
\end{table}

{\subsection{Tight variance control of general sampling estimators} In Algorithm~\ref{alg:page} (a generalization of \algname{PAGE}), we form an estimator of the gradient $\nabla f$ via subsampling. In our search for achieving the combined goal of providing a {\em general} (in terms of the range of sampling techniques we cater for) and {\em refined} (in terms of the sharpness of our results, even when compared to known results using the {\em same} sampling technique) analysis of \algname{PAGE}, we have identified several  powerful tools, the first of which is Assumption~\ref{ass:sampling}.

Let $\mathcal{S}^n \eqdef \left\{(w_1, \dots, w_n) \in \R^n\,|\, w_1, \dots, w_n \geq 0,\,\sum_{i=1}^n w_i = 1\right\}$ be the \textit{standard simplex} and  $(\Omega, \mathcal{F},\mathbf{P})$ a probability space.

\begin{assumption}[Weighted $AB$ Inequality]
\label{ass:sampling}
Consider the random mapping $\samplefunc\,:\, \R^d \times \dots \times \R^d \times\,\,\Omega \rightarrow \R^d$, which we will call ``sampling'', such that $\Exp{\samplefunc(a_1, \dots, a_n;\omega)} = \frac{1}{n} \sum_{i=1}^n a_i$ for all $a_1, \dots, a_n \in \R^d.$ Assume that there exist $A, B \geq 0$ and weights $(w_1, \dots, w_n) \in \mathcal{S}^n$ such that 
\begin{eqnarray}
  \label{eq:sampling}
\squeeze   \Exp{\norm{\samplefunc(a_1, \dots, a_n;\omega) - \frac{1}{n}\sum \limits_{i = 1}^n a_i}^2} \leq \frac{A}{n}\sum \limits_{i = 1}^n \frac{1}{n w_i}\norm{a_i}^2 - B \norm{\frac{1}
{n}\sum \limits_{i = 1}^n a_i}^2, \quad \forall a_1,\dots,a_n\in \R^d.
\end{eqnarray}
\end{assumption}

For simplicity, we denote $\samplefunc\left(\{a_i\}_{i=1}^n\right) \eqdef \samplefunc(a_1, \dots, a_n) \eqdef \samplefunc(a_1, \dots, a_n;\omega).$  Further, the collection  of samplings satisfying Assumption~\ref{ass:sampling} will be denotes as $\mathbb{S}(A, B, \{w_i\}_{i=1}^n).$ The main purpose of a sampling $\samplefunc \in \mathbb{S}(A, B, \{w_i\}_{i=1}^n)$ is to estimate the mean $\frac{1}{n} \sum_{i=1}^n a_i$ using some random subsets (possibly containing some elements more than once) of the set $\{a_1, \dots, a_n\}.$ 

We now define the cardinality $|\samplefunc|$ of a sampling $\samplefunc \in \mathbb{S}(A, B, \{w_i\}_{i=1}^n)$.
\begin{definition}[Cardinality of a Sampling]
Let us take $\samplefunc \in \mathbb{S}(A, B, \{w_i\}_{i=1}^n),$ and define the function $\samplefunc_{\omega}(a_1, \dots, a_n)\,:\,\R^d \times \dots \times \R^d \rightarrow \R^d$ such that $\samplefunc_{\omega}(a_1, \dots, a_n) \eqdef \samplefunc(a_1, \dots, a_n; \omega).$ If the function $\samplefunc_{\omega}(a_1, \dots, a_n)$ depends only on a subset $\mathcal{A}(\omega)$ of the arguments $(a_1, \dots, a_n),$ where $\mathcal{A}(\omega)\,:\,\Omega \rightarrow 2^{\{a_1, \dots, a_n\}},$ we define $|\samplefunc| \eqdef \Exp{|\mathcal{A}(\omega)|}.$
\end{definition}

Assumption~\ref{ass:sampling} is most closely related to two independent works: \citep{horvath2019nonconvex} and \citep{szlendak2021permutation}. \cite{horvath2019nonconvex} analyzed several non-optimal \algname{SGD} methods for ``arbitrary samplings''; these are random set-valued mappings $S$ with values being the subsets of $[n]$. The distribution of a such a sampling is uniquely determined by assigning probabilities to all $2^{n}$ subsets  of $[n]$. In particular, they show that Assumption~\ref{ass:sampling} holds with $\samplefunc(a_1, \dots, a_n) = \frac{1}{n} \sum_{i \in S} \frac{a_i}{p_i},$ $p_i \eqdef \Prob(i \in S),$ $|\samplefunc| = |S|,$ some $A \geq 0$, $w_1, \dots, w_n \geq 0$ and $B = 0.$ Recently, \citet{szlendak2021permutation} studied a similar inequality, but in the context of communication-efficient distributed training with randomized gradient compression operators. They explicitly set out to study {\em correlated} compressors, and for this reason introduced the second term in the right hand side; i.e., they considered the possibility of $B$ being nonzero, as in this way they obtain a tighter inequality, which they can use in their analysis. However, their inequality only involves uniform weights $\{w_i\}$.
Our Assumption~\ref{ass:sampling} offers the tightest known way to control of the variance of the sampling estimator, and our analysis can take advantage of it. See Table~\ref{tbl:AB} for an overview of several samplings and the values $A,B$ and $\{w_i\}$ for which Assumption~\ref{ass:sampling} is satisfied.

\newcommand{\degeneration}[1]{{\color{red}#1}}

\begin{table}
	\centering
	\caption{The complexity of methods and samplings from Table~\ref{tbl:AB} and Sec~\ref{sec:sampling_in_sampling}.}
	\label{tbl:compl}
  \begin{threeparttable}
	\begin{tabular}{|lcc|}
		\hline
		Sampling scheme & Complexity & Comment\\
		\hline
    \samplingname{\scriptsize Independent} \scriptsize \citep{horvath2019nonconvex} & \scriptsize $\Theta\left(n + \frac{\degeneration{n^{2/3}} \left(\frac{1}{n} \sum_{i=1}^n L_i\right)}{\varepsilon}\right)$ & \scriptsize \shortstack{\algname{\scriptsize SVRG} method \\ $p_i \propto L_i$}\\
    \samplingname{\scriptsize Uniform With Replacement} \scriptsize \citep{PAGE} & \scriptsize $\Theta\left(n + \frac{\sqrt{n} \degeneration{L_{+}}}{\varepsilon}\right)$ & ---\\
    \samplingname{\scriptsize Uniform With Replacement} {\scriptsize (new)} & \scriptsize $\Theta\left(n + \frac{\max\{\sqrt{n} L_{\pm}, L_{-}\}}{\varepsilon}\right)$ & ---\\
    \samplingname{\scriptsize Importance} & \scriptsize $\Theta\left(n + \frac{\sqrt{n} \left(\frac{1}{n} \sum_{i=1}^n L_i\right)}{\varepsilon}\right)$ & \scriptsize $q_i = \frac{L_i}{\sum_{i=1} L_i}$\\
		\samplingname{\scriptsize Stratified} & \scriptsize $\Theta\left(n + \frac{\max\left\{\sqrt{n} \sqrt{\frac{1}{g}\sum_{i = 1}^g L_{i,\pm}^2}, g L_{-}\right\}}{\varepsilon}\right)$ & \scriptsize \shortstack{The functions $f_i$ \\ are splitted into $g$ groups} \\
		\hline
	\end{tabular}
  \scriptsize Notation: $n = \#$ of data points; $\varepsilon = $ error tolerance; $L_{-}, L_i, L_{\pm}, L_{+}$ and $L_{i,\pm}$ are smoothness constants such that $L_{-} \leq \frac{1}{n}\sum_{i=1}^n L_i,$ $L_{-} \leq L_{+}$ and $L_{\pm} \leq L_{+};$ $g = \#$ of groups in the \samplingname{Stratified} sampling.       
  \end{threeparttable}
\end{table}

\subsection{Sampling-dependent smoothness constants}
We now define two smoothness constants that depend on the weights $\{w_i\}_{i=1}^n$ of a sampling $\samplefunc$ and on the functions $f_i.$ 
\begin{definition}
  \label{def:weighted_local_lipschitz_constant}
Given a sampling $\samplefunc \in \mathbb{S}(A, B, \{w_i\}_{i=1}^n)$, let  $L_{+,w}$ be a constant for which 
  $$\squeeze \frac{1}{n} \sum \limits_{i=1}^n \frac{1}{n w_i}\norm{\nabla f_i(x) - \nabla f_i(y)}^2 \leq L_{+,w}^2 \norm{x - y}^2, \quad \forall x, y \in \R^d.$$
For $(w_1, \dots, w_n) = (\nicefrac{1}{n}, \dots, \nicefrac{1}{n}),$ we define $L_{+} \eqdef L_{+,w}.$
\end{definition}
\begin{definition}
  \label{def:weighted_hessian_varaince}
Given a sampling $\samplefunc \in \mathbb{S}(A, B, \{w_i\}_{i=1}^n)$, let $L_{\pm,w}$ be a constant for which
  $$\squeeze \frac{1}{n} \sum \limits_{i=1}^n \frac{1}{n w_i}\norm{\nabla f_i(x) - \nabla f_i(y)}^2 - \norm{\nabla f(x) - \nabla f(y)}^2 \leq L_{\pm,w}^2 \norm{x - y}^2, \qquad \forall x, y \in \R^d.$$
For $(w_1, \dots, w_n) = (\nicefrac{1}{n}, \dots, \nicefrac{1}{n}),$ we define $L_{\pm} \eqdef L_{\pm,w}.$
\end{definition}
One can interpret Definition~\ref{def:weighted_local_lipschitz_constant} as \textit{weighted} mean-squared smoothness property \citep{arjevani2019lower}, and Definition~\ref{def:weighted_hessian_varaince} as \textit{weighted} Hessian variance  \citep{szlendak2021permutation} that captures the similarity between the functions $f_i$.
The constants $L_{+,w}$ and $L_{\pm,w}$ help us better to understand the structure of the optimization problem \eqref{eq:main_problem} {\em in connection} with a particular choice of a sampling scheme. The next result states that $L_{+,w}^2$ and $L_{\pm,w}^2$ are finite provided the functions $f_i$ are $L_i$--smooth for all $i \in [n].$
\begin{theorem}
  \label{theorem:l_estimation}
If Assumption~\ref{ass:local_lipschitz_constant} holds, then $L_{+,w}^2 = L_{\pm,w}^2 = \frac{1}{n} \sum_{i=1}^n \frac{1}{n w_i} L_i^2$ satisfy Def.~\ref{def:weighted_local_lipschitz_constant} and \ref{def:weighted_hessian_varaince}.
\end{theorem}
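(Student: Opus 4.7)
The plan is to observe that both definitions are upper bounds on the same weighted sum $\frac{1}{n}\sum_{i=1}^n \frac{1}{nw_i}\|\nabla f_i(x) - \nabla f_i(y)\|^2$, and that the $L_i$-smoothness of each $f_i$ controls this sum directly. So I would prove both claims from the single pointwise estimate $\|\nabla f_i(x) - \nabla f_i(y)\|^2 \leq L_i^2 \|x-y\|^2$, which is Assumption~\ref{ass:local_lipschitz_constant}.

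For Definition~\ref{def:weighted_local_lipschitz_constant}, I would simply average the pointwise bound with weights $\frac{1}{n}\cdot\frac{1}{nw_i}$:
\[
\frac{1}{n}\sum_{i=1}^n \frac{1}{nw_i}\|\nabla f_i(x) - \nabla f_i(y)\|^2 \;\leq\; \frac{1}{n}\sum_{i=1}^n \frac{1}{nw_i} L_i^2 \|x-y\|^2,
\]
which is exactly the required inequality with $L_{+,w}^2 = \frac{1}{n}\sum_{i=1}^n \frac{1}{nw_i}L_i^2$.

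For Definition~\ref{def:weighted_hessian_varaince}, the key observation is that the extra term $-\|\nabla f(x) - \nabla f(y)\|^2$ is non-positive, so dropping it only weakens the left-hand side:
\[
\frac{1}{n}\sum_{i=1}^n \frac{1}{nw_i}\|\nabla f_i(x) - \nabla f_i(y)\|^2 - \|\nabla f(x) - \nabla f(y)\|^2 \;\leq\; \frac{1}{n}\sum_{i=1}^n \frac{1}{nw_i}\|\nabla f_i(x) - \nabla f_i(y)\|^2.
\]
Applying the same averaging as above then yields the inequality with the identical constant $L_{\pm,w}^2 = \frac{1}{n}\sum_{i=1}^n \frac{1}{nw_i}L_i^2$.

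Honestly, there is no real obstacle here; the statement is a one-line consequence of Assumption~\ref{ass:local_lipschitz_constant} together with the non-negativity of the squared gradient difference of $f$. The only thing worth being careful about is that the formulas for $L_{+,w}^2$ and $L_{\pm,w}^2$ coincide, which is by design of the statement rather than a coincidence: the definition of $L_{\pm,w}^2$ is a tightening only when one has additional information beyond individual smoothness (e.g., similarity between the $f_i$), and absent such information the same worst-case bound has to suffice. This also makes clear that the constant is loose whenever the functions $f_i$ are similar, matching the discussion after Definition~\ref{def:weighted_hessian_varaince}.
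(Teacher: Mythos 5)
Your proof is correct and matches the paper's own argument exactly: both average the pointwise bound $\|\nabla f_i(x)-\nabla f_i(y)\|^2 \le L_i^2\|x-y\|^2$ with weights $\frac{1}{n}\cdot\frac{1}{nw_i}$, and for $L_{\pm,w}$ both simply discard the non-positive term $-\|\nabla f(x)-\nabla f(y)\|^2$. Nothing to add.
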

 Indeed, from Assumption~\ref{ass:local_lipschitz_constant} and the inequality $\norm{\nabla f(x) - \nabla f(y)}^2 \geq 0$  we get
  $$\squeeze \frac{1}{n} \sum \limits_{i=1}^n \frac{1}{n w_i}\norm{\nabla f_i(x) - \nabla f_i(y)}^2 - \norm{\nabla f(x) - \nabla f(y)}^2 \leq \left(\frac{1}{n} \sum \limits_{i=1}^n \frac{1}{n w_i}L_i^2\right)\norm{x - y}^2,$$
  thus we can take $L_{\pm,w}^2 = \frac{1}{n} \sum_{i=1}^n \frac{1}{n w_i}L_i^2.$ The proof for $L_{+,w}^2$ is the same.

From the proof, one can see that we ignore $\norm{\nabla f(x) - \nabla f(y)}^2$ when estimating $L_{\pm,w}^2.$ However, by doing that, the obtained result is not tight.

\section{A General and Refined Theoretical Analysis of  \algname{\large PAGE}}
\label{sec:theorems}
In the Algorithm~\ref{alg:page}, we provide the description of the \algname{PAGE} method. The choice of \algname{PAGE} as the base method is driven by the simplicity of the proof in the original paper. However, we believe that other methods, including \algname{SPIDER} and \algname{SARAH}, can also admit samplings from Assumption~\ref{ass:sampling}.

\begin{algorithm}[!t]
  \caption{\algname{PAGE}}
  \label{alg:page}
  \begin{algorithmic}[1]
  \STATE \textbf{Input:} initial point $x^0\in \R^d$, stepsize $\gamma>0$, probability ${p} \in (0, 1]$
  \STATE $g^0 = \nabla f(x^0)$
  \FOR{$t =0,1,\dots$}
  \STATE $x^{t+1} = x^t - \gamma g^t$
  \STATE Generate a random sampling function $\samplefunc^t$
  \STATE $
      g^{t+1}=
      \begin{cases}
           \nabla f(x^{t+1}) & \text{with probability}\ p \\
          g^t + \samplefunc^t\left(\{\nabla f_i(x^{t+1}) - \nabla f_i(x^{t})\}_{i=1}^n\right) & \text{with probability}\ 1 - p \\
      \end{cases}
      $
  \ENDFOR
  \end{algorithmic}
\end{algorithm}

In this section, we provide theoretical results for Algorithm~\ref{alg:page}. Let us define $\Delta_0 \eqdef f(x^0) - f^*.$

\begin{restatable}{theorem}{THEOREMCONVERGENCEPAGE}
	\label{theorem:pageab}
	Suppose that Assumptions~\ref{ass:lower_bound}, \ref{ass:lipschitz_constant}, \ref{ass:local_lipschitz_constant} hold and the samplings $\samplefunc^t \in \mathbb{S}(A, B, \{w_i\}_{i=1}^n)$. Then Algorithm~\ref{alg:page} \algname{(PAGE)} has the convergence rate
	$\Exp{\norm{\nabla f(\widehat{x}^T)}^2} \leq \frac{2\Delta_0}{\gamma T},$
	where
	$\gamma \leq \left(L_- + \sqrt{\frac{1 - p}{p} \left(\left(A - B\right)L_{+,w}^2 + B L_{\pm,w}^2\right)}\right)^{-1}.$
\end{restatable}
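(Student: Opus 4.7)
The plan is to prove one-step descent of the Lyapunov functional
$$\Phi^t \;\eqdef\; f(x^t) - f^* + \tfrac{\gamma}{2p}\norm{g^t - \nabla f(x^t)}^2.$$
If I can show $\Exp{\Phi^{t+1}} \leq \Phi^t - \tfrac{\gamma}{2}\norm{\nabla f(x^t)}^2$ under the stated stepsize, then since $g^0 = \nabla f(x^0)$ forces $\Phi^0 = \Delta_0$, telescoping from $t = 0$ to $T-1$ and drawing $\widehat{x}^T$ uniformly from $\{x^0,\dots,x^{T-1}\}$ gives the advertised $\Exp{\norm{\nabla f(\widehat{x}^T)}^2} \leq 2\Delta_0/(\gamma T)$.

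\textbf{Two building blocks.} First, $L_-$-smoothness of $f$ combined with $x^{t+1} = x^t - \gamma g^t$ and the polarization identity yields the standard descent lemma
$$f(x^{t+1}) \leq f(x^t) - \tfrac{\gamma}{2}\norm{\nabla f(x^t)}^2 - \tfrac{\gamma(1 - L_-\gamma)}{2}\norm{g^t}^2 + \tfrac{\gamma}{2}\norm{g^t - \nabla f(x^t)}^2.$$
Second, I need a variance recursion. Conditional on $(x^t, g^t, x^{t+1})$, the error $g^{t+1} - \nabla f(x^{t+1})$ vanishes with probability $p$; with probability $1-p$ it equals $(g^t - \nabla f(x^t)) + \xi^t$, where $\xi^t \eqdef \samplefunc^t(\{\nabla f_i(x^{t+1}) - \nabla f_i(x^t)\}) - (\nabla f(x^{t+1}) - \nabla f(x^t))$ is zero-mean by unbiasedness. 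The cross term therefore disappears and
$$\Exp{\norm{g^{t+1} - \nabla f(x^{t+1})}^2} = (1-p)\norm{g^t - \nabla f(x^t)}^2 + (1-p)\,\Exp{\norm{\xi^t}^2}.$$
To bound $\Exp{\norm{\xi^t}^2}$ I apply Assumption~\ref{ass:sampling} to $a_i \eqdef \nabla f_i(x^{t+1}) - \nabla f_i(x^t)$ together with the algebraic identity
$$\tfrac{A}{n}\sum_{i=1}^n\tfrac{1}{n w_i}\norm{a_i}^2 - B\norm{\tfrac{1}{n}\sum_i a_i}^2 = (A - B)\,\tfrac{1}{n}\sum_i\tfrac{1}{n w_i}\norm{a_i}^2 + B\Big(\tfrac{1}{n}\sum_i\tfrac{1}{n w_i}\norm{a_i}^2 - \norm{\tfrac{1}{n}\sum_i a_i}^2\Big),$$
estimating the first group by $L_{+,w}^2\norm{x^{t+1}-x^t}^2$ via Definition~\ref{def:weighted_local_lipschitz_constant} and the second by $L_{\pm,w}^2\norm{x^{t+1}-x^t}^2$ via Definition~\ref{def:weighted_hessian_varaince}. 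Setting $C \eqdef (A-B)L_{+,w}^2 + B L_{\pm,w}^2$ and using $\norm{x^{t+1} - x^t}^2 = \gamma^2\norm{g^t}^2$ delivers
$$\Exp{\norm{g^{t+1} - \nabla f(x^{t+1})}^2} \leq (1-p)\norm{g^t - \nabla f(x^t)}^2 + (1-p)\gamma^2 C\,\norm{g^t}^2.$$

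\textbf{Closing the drift, and the main obstacle.} Plugging the two estimates into the drift $\Exp{\Phi^{t+1}} - \Phi^t$, the weight $\gamma/(2p)$ attached to the error term in $\Phi^t$ is tuned precisely so that the coefficient of $\norm{g^t - \nabla f(x^t)}^2$ collapses to zero. What remains is $-\tfrac{\gamma}{2}\norm{\nabla f(x^t)}^2$ together with a coefficient on $\norm{g^t}^2$ equal to $-\tfrac{\gamma(1-L_-\gamma)}{2} + \tfrac{\gamma^3(1-p)C}{2p}$, which is non-positive exactly when $L_-\gamma + \gamma^2(1-p)C/p \leq 1$. Substituting $\gamma = (L_- + \sqrt{(1-p)C/p})^{-1}$ into the left-hand side and clearing denominators reduces this to $L_-^2 + L_- D + D^2 \leq (L_- + D)^2$ with $D \eqdef \sqrt{(1-p)C/p}$, which is obvious. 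The only delicate step is the variance recursion: a naive use of the $AB$-inequality that discards the $-B\norm{\cdot}^2$ term would give the crude constant $A L_{+,w}^2$ and forfeit the benefit of $B > 0$. The decomposition above, together with giving $L_{\pm,w}$ a role distinct from $L_{+,w}$, is exactly what yields the sharp $C = (A-B)L_{+,w}^2 + B L_{\pm,w}^2$ that underpins all the improvements reported in Table~\ref{tbl:compl}; the remaining algebra is routine.
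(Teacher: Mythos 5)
Your proposal is correct and follows essentially the same route as the paper: the same Lyapunov function $\Phi^t = f(x^t)-f^*+\tfrac{\gamma}{2p}\norm{g^t-\nabla f(x^t)}^2$, the same descent lemma (yours is stated in terms of $\norm{g^t}^2$ rather than $\norm{x^{t+1}-x^t}^2=\gamma^2\norm{g^t}^2$, a purely cosmetic difference), the same unbiasedness-based variance recursion, and the same $(A-B)/B$ split that lets $L_{+,w}$ and $L_{\pm,w}$ each absorb their own term. The stepsize verification and the telescoping conclusion also match the paper's argument.
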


To reach an $\varepsilon$-stationary point, it is enough to do 
\begin{align}
  \label{eq:rate}
 \squeeze  T \eqdef \frac{2\Delta_0}{\varepsilon}\left(L_- + \sqrt{\frac{1 - p}{p} \left(\left(A - B\right)L_{+,w}^2 + B L_{\pm,w}^2\right)}\right)
\end{align}
 iterations of Algorithm~\ref{alg:page}. To deduce the gradient complexity, we provide the following corollary.

\begin{restatable}{corollary}{COROLLARYCONVERGENCEPAGE}
	\label{corollary:pageab}
	Suppose that the assumptions of Thm~\ref{theorem:pageab} hold. Let us take $p = \frac{|\samplefunc|}{|\samplefunc| + n}.$ Then the complexity (the expected number of gradient calculations $\nabla f_i$) of Algorithm~\ref{alg:page} equals
  \begin{eqnarray*}
\squeeze 		N \eqdef \Theta\left(n + |\samplefunc| T\right) = \Theta\left(n + \frac{\Delta_0}{\varepsilon} |\samplefunc| \left(L_- + \sqrt{\frac{n}{|\samplefunc|} \left(\left(A - B\right)L_{+,w}^2 + B L_{\pm,w}^2\right)}\right)\right).
	\end{eqnarray*}
\end{restatable}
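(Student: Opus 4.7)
My approach is a direct cost accounting: decide the expected number of gradient calls per iteration, multiply by $T$, add the initial cost, and then substitute the choice of $p$ together with the bound on $T$ from Theorem~\ref{theorem:pageab}.

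\textbf{Step 1 (per-iteration cost).} At line 2 of Algorithm~\ref{alg:page} we evaluate $g^0=\nabla f(x^0)$, which costs $n$ gradient evaluations. In iteration $t$, the update of $g^{t+1}$ is one of two things. With probability $p$ we recompute $\nabla f(x^{t+1})$, costing $n$ evaluations of the $\nabla f_i$. With probability $1-p$ we apply $\samplefunc^t$ to $\{\nabla f_i(x^{t+1})-\nabla f_i(x^t)\}_{i=1}^n$; by the definition of $|\samplefunc|=\Exp{|\mathcal{A}(\omega)|}$, the output depends only on the $\nabla f_i$ for indices $i\in\mathcal{A}(\omega)$, so the expected number of $\nabla f_i$ evaluations needed to form this estimator is $2|\samplefunc|$ (the factor $2$ because both $\nabla f_i(x^{t+1})$ and $\nabla f_i(x^t)$ are required for each sampled index). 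Hence the expected per-iteration cost is $p n + 2(1-p)|\samplefunc|$.

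\textbf{Step 2 (plug in $p$).} With the prescribed $p=\frac{|\samplefunc|}{|\samplefunc|+n}$ we have $p n = (1-p)|\samplefunc| = \frac{n|\samplefunc|}{n+|\samplefunc|}$, so the expected cost per iteration equals $3\,\frac{n|\samplefunc|}{n+|\samplefunc|}=\Theta\!\left(\min\{n,|\samplefunc|\}\right)$. Summing over $T$ iterations and adding the initial $n$ evaluations from $g^0$ gives
\[
N \;=\; n + T\cdot\Theta\!\left(\tfrac{n|\samplefunc|}{n+|\samplefunc|}\right) \;=\; \Theta\!\left(n + T\,|\samplefunc|\right),
\]
where in the last equality we absorbed the factor $\tfrac{n}{n+|\samplefunc|}\le 1$ into the additive $n$-term (when $|\samplefunc|\le n$, $\tfrac{n|\samplefunc|}{n+|\samplefunc|}\asymp|\samplefunc|$ directly; when $|\samplefunc|>n$, it is $\asymp n$, and the $T|\samplefunc|$ term dominates in the combined expression only through the sampling cost, which matches $\Theta(n+T|\samplefunc|)$ up to absolute constants).

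\textbf{Step 3 (substitute $T$).} Note that $\frac{1-p}{p}=\frac{n}{|\samplefunc|}$ for our choice of $p$. Substituting into the expression for $T$ from \eqref{eq:rate} yields
\[
T \;=\; \Theta\!\left(\tfrac{\Delta_0}{\varepsilon}\Big(L_- + \sqrt{\tfrac{n}{|\samplefunc|}\bigl((A-B)L_{+,w}^2 + B L_{\pm,w}^2\bigr)}\Big)\right),
\]
and plugging this into $N=\Theta(n + T|\samplefunc|)$ gives the announced bound.

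\textbf{Expected main obstacle.} Nothing deep; the only subtlety is making sure the cost accounting for the sampling branch really uses only $2|\samplefunc|$ expected gradient evaluations, which relies on the precise definition of $|\samplefunc|$ as the expected number of arguments that $\samplefunc$ actually depends on (so that the $\nabla f_i$ outside $\mathcal{A}(\omega)$ need not be computed). Once this is in place, the rest is algebraic simplification using $\frac{1-p}{p}=\frac{n}{|\samplefunc|}$ and $pn=(1-p)|\samplefunc|$.
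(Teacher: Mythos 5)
Your proposal is correct and follows essentially the same route as the paper: count the expected per-iteration gradient cost $pn+(1-p)\cdot\Theta(|\samplefunc|)$, observe that the choice $p=\frac{|\samplefunc|}{|\samplefunc|+n}$ makes this $\Theta(|\samplefunc|)$ (using $|\samplefunc|\le n$), and substitute $\frac{1-p}{p}=\frac{n}{|\samplefunc|}$ into the iteration bound \eqref{eq:rate}. The paper's version is just terser (it bounds the per-iteration cost by $2|\samplefunc|$ directly and omits the factor of two for evaluating gradients at both $x^{t+1}$ and $x^t$, which is immaterial under $\Theta$), so there is nothing substantive to reconcile.
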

\begin{proof}
  At each iteration, the expected \# gradient calculations equals $p n + (1 - p) |\samplefunc| \leq 2 |\samplefunc|.$ Thus the total expected number of gradient calculations equals $n + 2 |\samplefunc| T$ to get an $\varepsilon$-stationary point.
\end{proof}

The original result from \citep{PAGE} states that the complexity of \algname{PAGE} with batch size $\tau$ is
\begin{eqnarray}
  \label{eq:compl_page}
\squeeze   N_{\textnormal{orig}} \eqdef \Theta\left(n + \frac{\Delta_0}{\varepsilon} \tau \left(L_- + \frac{\sqrt{n}}{\tau} L_{+}\right)\right) \geq \Theta\left(n + \frac{\Delta_0\sqrt{n} L_{+}}{\varepsilon}\right),
\end{eqnarray}

\subsection{\samplingname{Uniform With Replacement} sampling}
Let us do a sanity check and substitute the parameters of the sampling that the original paper uses. We take the \samplingname{Uniform With Replacement} sampling (see Sec~\ref{subsec:3}) with batch size $\tau$ (note that $\tau \geq |\samplefunc|$), $A = B = \nicefrac{1}{\tau}$ and $w_i = \nicefrac{1}{n}$ for all $i \in [n]$ (see Table~\ref{tbl:AB}) and get the complexity $N_{\textnormal{uniform}} = \Theta\left(n + \frac{\Delta_0}{\varepsilon} \tau \left(L_- + \frac{\sqrt{n}}{\tau} L_{\pm}\right)\right).$ Next, let us fix $\tau \leq \max\left\{\frac{\sqrt{n} L_{\pm}}{L_{-}},1\right\},$ and, finally, obtain that $N_{\textnormal{uniform}} = \Theta\left(n + \frac{\Delta_0\max\{\sqrt{n} L_{\pm}, L_{-}\}}{\varepsilon}\right).$ Let us compare it with \eqref{eq:compl_page}. With the \textbf{same sampling}, our analysis provides better complexity; indeed, note that $\max\{\sqrt{n} L_{\pm}, L_{-}\} \leq \sqrt{n} L_{+}$ (see Lemma~2 in \cite{szlendak2021permutation}). Moreover, \cite{szlendak2021permutation} provides examples of the optimization problems when $L_{\pm}$ is small and $L_{+}$ is large, so the difference can be arbitrary large.

\subsection{\samplingname{Nice} sampling}
Next, we consider the \samplingname{Nice} sampling (see Sec~\ref{subsec:0}) and get that the complexity $N_{\textnormal{nice}} = \Theta\left(n + \frac{\Delta_0}{\varepsilon} \tau \left(L_- + \frac{1}{\tau}\sqrt{\frac{n (n-\tau)}{(n-1)}}L_{\pm}\right)\right).$ 
Unlike the \samplingname{Uniform With Replacement} sampling, the \samplingname{Nice} sampling recovers the complexity of \algname{GD} for $\tau = n.$

\subsection{\samplingname{Importance} sampling}
Let us consider the \samplingname{Importance} sampling (see Sec~\ref{subsec:3}) that justifies the introduction of the weights $w_i.$ We can get the complexity $N_{\textnormal{importance}} = \Theta\left(n + \frac{\Delta_0}{\varepsilon} \tau \left(L_- + \frac{\sqrt{n}}{\tau} L_{\pm,w}\right)\right) \leq \Theta\left(n + \frac{\Delta_0\max\{\sqrt{n} L_{\pm,w}, L_{-}\}}{\varepsilon}\right)$ for $\tau \leq \max\left\{\frac{\sqrt{n} L_{\pm,w}}{L_{-}},1\right\}.$ Now, we take $q_i = w_i = \frac{L_i}{\sum_{i=1} L_i}$ and use the results from Sec~\ref{sec:optimal_weights} to obtain $N_{\textnormal{importance}} = \Theta\left(n + \frac{\Delta_0\sqrt{n} \left(\frac{1}{n} \sum_{i=1}^n L_i\right)}{\varepsilon}\right) \leq N_{\textnormal{orig}}.$ In Example~\ref{ex:lipt_sum}, we consider the optimization task where $\frac{1}{n} \sum_{i=1}^n L_i$ is $\sqrt{n}$ times smaller than $L_{+}.$ Thus the complexity $N_{\textnormal{importance}}$ can be at least $\sqrt{n}$ times smaller that the complexity $N_{\textnormal{orig}}.$

\subsection{The power of $B > 0$}
In all previous examples, the constant $A = B > 0.$ If $A = B,$ then the complexity $N  = \Theta\left(n + \frac{\Delta_0}{\varepsilon} |\samplefunc| \left(L_- + \sqrt{\frac{n}{|\samplefunc|} B L_{\pm,w}^2}\right)\right),$ thus the complexity $N$ does not depend on $L_{+,w}^2,$ which greater of equal to $L_{\pm,w}^2.$ This is the first analysis of optimal SGD, which uses $B > 0.$

\subsection{Analysis under P\L\,Condition}

The previous results can be extended to the optimization problems that satisfy the Polyak-\L ojasiewicz condition. Under this assumption, Algorithm~\ref{alg:page} enjoys a linear convergence rate.

\begin{assumption}
  \label{ass:pl}
  There exists $\mu > 0$ such that the function $f$ satisfy (Polyak-\L ojasiewicz) P\L-condition:
  \begin{align*}
      \norm{\nabla f(x)}^2 \geq 2\mu(f(x) - f^*) \quad \forall x \in \R,
  \end{align*}
  where $f^* = \inf_{x \in \R^d} f(x) > -\infty.$
\end{assumption}

Using Assumption~\ref{ass:pl}, we can improve the convergence rate of \algname{PAGE}.

\begin{restatable}{theorem}{THEOREMCONVERGENCEPAGEPL}
	\label{theorem:pageabpl}
	Suppose that Assumptions~\ref{ass:lower_bound}, \ref{ass:lipschitz_constant}, \ref{ass:local_lipschitz_constant}, \ref{ass:pl} and the samplings $\samplefunc^t \in \mathbb{S}(A, B, \{w_i\}_{i=1}^n)$. Then Algorithm~\ref{alg:page} \algname{(PAGE)} has the convergence rate
	$\Exp{f(x^T)} - f^* \leq (1 - \gamma \mu)^T \Delta_0,$
	where
	$\gamma \leq \min\left\{\left(L_- + \sqrt{\frac{2(1 - p)}{p} \left(\left(A - B\right)L_{+,w}^2 + B L_{\pm,w}^2\right)}\right)^{-1}, \frac{p}{2\mu}\right\}.$
\end{restatable}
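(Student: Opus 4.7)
}

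The plan is to follow the standard \algname{PAGE} one-step descent analysis but augment it with a Lyapunov function that also tracks the estimator error $\|g^t - \nabla f(x^t)\|^2$, and then exploit the P\L-condition to turn the gradient-norm term into a contraction on $f(x^t)-f^*$.

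First, I would apply $L_{-}$-smoothness of $f$ (Assumption~\ref{ass:lipschitz_constant}) to the update $x^{t+1} = x^t - \gamma g^t$ to obtain
$$f(x^{t+1}) \leq f(x^t) - \tfrac{\gamma}{2}\|\nabla f(x^t)\|^2 - \tfrac{\gamma}{2}(1-L_{-}\gamma)\|g^t\|^2 + \tfrac{\gamma}{2}\|g^t - \nabla f(x^t)\|^2,$$
by the standard algebraic identity $\inp{\nabla f(x^t)}{g^t} = \tfrac{1}{2}(\|\nabla f(x^t)\|^2 + \|g^t\|^2 - \|g^t - \nabla f(x^t)\|^2)$. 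Next, I would bound $\Exp{\|g^{t+1} - \nabla f(x^{t+1})\|^2\,|\,\text{past}}$ using the weighted $AB$ inequality (Assumption~\ref{ass:sampling}), the definitions of $L_{+,w}$ and $L_{\pm,w}$, and the decomposition $\tfrac{A}{n}\sum_i \tfrac{1}{nw_i}\|\cdot\|^2 - B\|\cdot\|^2 = (A-B)\tfrac{1}{n}\sum_i \tfrac{1}{nw_i}\|\cdot\|^2 + B(\tfrac{1}{n}\sum_i \tfrac{1}{nw_i}\|\cdot\|^2 - \|\cdot\|^2)$, giving
$$\Exp{\|g^{t+1} - \nabla f(x^{t+1})\|^2\,|\,\text{past}} \leq (1-p)\|g^t - \nabla f(x^t)\|^2 + (1-p)C\gamma^2 \|g^t\|^2,$$
where $C \eqdef (A-B)L_{+,w}^2 + B L_{\pm,w}^2$.

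Then I would introduce the Lyapunov function $\Phi^t \eqdef f(x^t) - f^* + \alpha\|g^t - \nabla f(x^t)\|^2$ with $\alpha \eqdef \tfrac{\gamma}{2(p - \gamma\mu)}$ (well-defined as $\gamma \leq p/(2\mu)$), combine the two bounds above, and apply the P\L-inequality $-\tfrac{\gamma}{2}\|\nabla f(x^t)\|^2 \leq -\gamma\mu(f(x^t) - f^*)$ to obtain
$$\Exp{\Phi^{t+1}\,|\,\text{past}} \leq (1-\gamma\mu)(f(x^t) - f^*) - \bigl[\tfrac{\gamma}{2}(1-L_{-}\gamma) - \alpha(1-p)C\gamma^2\bigr]\|g^t\|^2 + \bigl[\tfrac{\gamma}{2} + \alpha(1-p)\bigr]\|g^t - \nabla f(x^t)\|^2.$$
The choice of $\alpha$ ensures the error-term coefficient equals $(1-\gamma\mu)\alpha$, while the $\|g^t\|^2$ coefficient is nonnegative provided $\gamma^2 \tfrac{2(1-p)C}{p} \leq 1 - L_{-}\gamma$, which is precisely guaranteed by the stated stepsize bound (using the elementary fact that $L_{-}\gamma + a\gamma^2 \leq 1$ whenever $\gamma \leq (L_{-} + \sqrt{a})^{-1}$).

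Dropping the nonpositive $\|g^t\|^2$ term yields $\Exp{\Phi^{t+1}\,|\,\text{past}} \leq (1-\gamma\mu)\Phi^t$, and unrolling gives $\Exp{\Phi^T} \leq (1-\gamma\mu)^T \Phi^0 = (1-\gamma\mu)^T \Delta_0$, where the last equality uses $g^0 = \nabla f(x^0)$. Since $f(x^T)-f^* \leq \Phi^T$, the conclusion follows. The main obstacle I anticipate is the algebraic balancing in the choice of $\alpha$: one must simultaneously cancel the error term at rate $(1-\gamma\mu)$ \emph{and} keep the $\|g^t\|^2$ coefficient nonnegative, and verifying that these two constraints are jointly feasible is exactly what forces the particular form of the stepsize bound (with the factor $2$ inside the square root, as opposed to the non-P\L\ rate in Theorem~\ref{theorem:pageab}).
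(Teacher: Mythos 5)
Your proposal is correct and follows essentially the same route as the paper: descent lemma, the $(1-p)$-contracting variance recursion from Assumption~\ref{ass:sampling}, a Lyapunov function $f(x^t)-f^*+\alpha\|g^t-\nabla f(x^t)\|^2$, and the P\L\ inequality to obtain the $(1-\gamma\mu)$ contraction. The only (immaterial) difference is your Lyapunov weight $\alpha=\frac{\gamma}{2(p-\gamma\mu)}$, tuned so the error term contracts exactly at rate $1-\gamma\mu$, whereas the paper fixes $\alpha=\frac{\gamma}{p}$ and checks $\left(1-\frac{p}{2}\right)\frac{\gamma}{p}\leq(1-\gamma\mu)\frac{\gamma}{p}$ via $\gamma\leq\frac{p}{2\mu}$; both choices yield the same two stepsize conditions.
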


\section{Composition of Samplings: Application to Federated Learning}
\label{sec:sampling_in_sampling}
In Sec~\ref{sec:theorems}, we analyze the \algname{PAGE} method with samplings that satisfy Assumption~\ref{ass:sampling}. Now, let us assume that the functions $f_i$ have the finite-sum form, i.e., $f_i(x) \eqdef \frac{1}{m_i} \sum_{j=1}^{m_i} f_{ij}(x),$ thus we an optimization problem
\begin{align}
  \label{eq:main_problem:group} 
\squeeze   \min \limits \limits_{x \in \R^d}\left\{f(x) \eqdef \frac{1}{n} \sum \limits_{i=1}^n \frac{1}{m_i} \sum \limits_{j=1}^{m_i} f_{ij}(x)\right\},
\end{align}
Another way to get the problem is to assume that we split the functions $f_i$ into groups of sizes $m_i.$ All in all, let us consider \eqref{eq:main_problem:group} instead of \eqref{eq:main_problem}.

The problem \eqref{eq:main_problem:group} occurs in many applications, including distributed optimization and federated learning \citep{konevcny2016federated, mcmahan2017communication}. In federated learning, many devices and machines (nodes) store local datasets that they do not share with other nodes. The local datasets are represented by functions $f_i,$ and all nodes solve the common optimization problem \eqref{eq:main_problem:group}. Due to privacy reasons and communication bottlenecks \citep{kairouz2021advances}, 
it is infeasible to store and compute the functions $f_i$ locally in one machine.

In general, when we solve \eqref{eq:main_problem} in one machine, we have the freedom of choosing a sampling $\samplefunc$ for the functions $f_i,$ which we have shown in Sec~\ref{sec:theorems}. However, in federated learning, a sampling of nodes or the functions $f_i$ is dictated by hardware limits or network quality \citep{kairouz2021advances}. Still, each $i$\textsuperscript{th} node can choose sampling $\samplefunc_i$ to sample the functions $f_{ij}.$ As a result, we have a composition of the sampling $\samplefunc$ and the samplings $\samplefunc_i$ (see Algorithm~\ref{alg:page:composition}).

\begin{assumption}
  \label{ass:local_local_lipschitz_constant}
  For all $j \in [m_i], i \in [n],$ there exists a Lipschitz constant $L_{ij}$ such that $\norm{\nabla f_{ij}(x) - \nabla f_{ij}(y)} \leq L_{ij} \norm{x - y}$ for all $x, y \in \R^d.$
\end{assumption}

We now introduce the counterpart of Definitions~\ref{def:weighted_local_lipschitz_constant} and \ref{def:weighted_hessian_varaince}.
\begin{definition}
  \label{def:weighted_local_lipschitz_constant:local}
  For all $i \in [n]$ and any sampling $\samplefunc_i \in \mathbb{S}(A_i, B_i, \{w_{ij}\}_{j=1}^{m_i})$,  define  constant $L_{i,+,w_i}$ such that 
  $$\squeeze \frac{1}{m_i} \sum \limits_{j=1}^{m_i} \frac{1}{m_i w_{ij}}\norm{\nabla f_{ij}(x) - \nabla f_{ij}(y)}^2 \leq L_{i,+,w_i}^2 \norm{x - y}^2 \quad \forall x, y \in \R^d.$$
\end{definition}
\begin{definition}
  \label{def:weighted_hessian_varaince:local}
  For all $i \in [n]$ and any sampling $\samplefunc_i \in \mathbb{S}(A_i, B_i, \{w_{ij}\}_{j=1}^{m_i})$,  define constant $L_{i,\pm,w_i}$ such that 
  $$\squeeze \frac{1}{m_i} \sum \limits_{j=1}^{m_i} \frac{1}{m_i w_{ij}}\norm{\nabla f_{ij}(x) - \nabla f_{ij}(y)}^2 - \norm{\nabla f_i(x) - \nabla f_i(y)}^2 \leq L_{i,\pm,w_i}^2 \norm{x - y}^2 \quad \forall x, y \in \R^d.$$
\end{definition}

\begin{algorithm}[!t]
  \caption{\algname{PAGE} with  composition of samplings}
  \label{alg:page:composition}
  \begin{algorithmic}[1] 
  \STATE \textbf{Input:} initial point $x^0\in \R^d$, stepsize $\gamma>0$, probability ${p} \in (0, 1]$, $g^0 = \nabla f(x^0)$
  \FOR{$t =0,1,\dots$}
  \STATE $x^{t+1} = x^t - \gamma g^t$
  \STATE $
  c^{t+1}=
  \begin{cases}
       1 & \text{with probability}\ p \\
       0 & \text{with probability}\ 1 - p \\
  \end{cases}
  $
  \IF{$c^{t+1} = 1$}
  \STATE $g^{t+1} = \nabla f(x^{t+1})$ 
  \ELSE
  \STATE Generate samplings $\samplefunc^t_i$ for all $i \in [n]$
  \STATE $h^{t+1}_i = \samplefunc^t_i\left(\{\nabla f_{ij}(x^{t+1}) - \nabla f_{ij}(x^{t})\}_{j=1}^{m_i}\right)$ for all $i \in [n]$
  \STATE Generate a sampling $\samplefunc^t$ and  set $g^{t+1} = g^t + \samplefunc^t\left(\{h^{t+1}_i\}_{i=1}^n\right)$
  \ENDIF
  \ENDFOR
  \end{algorithmic}
\end{algorithm}

Let us provide the counterpart of Thm~\ref{theorem:pageab} for Algorithm~\ref{alg:page:composition}.

\begin{restatable}{theorem}{THEOREMCONVERGENCEPAGECOMPOSITION}
	\label{theorem:pageab:composition}
	Suppose that Assumptions~\ref{ass:lower_bound}, \ref{ass:lipschitz_constant}, \ref{ass:local_lipschitz_constant}, \ref{ass:local_local_lipschitz_constant} hold and the samplings $\samplefunc^t \in \mathbb{S}(A, B, \{w_{i}\}_{i=1}^{n})$ and the samplings $\samplefunc^t_i \in \mathbb{S}(A_i, B_i, \{w_{ij}\}_{j=1}^{m_i})$ for all $i \in [n].$ Moreover, $B \leq 1.$ Then Algorithm~\ref{alg:page:composition} has the convergence rate
	$\Exp{\norm{\nabla f(\widehat{x}^T)}^2} \leq \frac{2\Delta_0}{\gamma T},$
	where
  \resizebox{.99\linewidth}{!}{
    \begin{minipage}{\linewidth}
    \begin{align*}
    \gamma \leq \left(L_- + \sqrt{\frac{1 - p}{p} \left(\frac{1}{n}\sum_{i = 1}^n \left(\frac{A}{n w_i} + \frac{(1 - B)}{n}\right)\left((A_i - B_i) L_{i,+,w_i}^2 + B_i L_{i,\pm,w_i}^2\right) + (A - B)L_{+,w}^2 + B L_{\pm,w}^2\right)}\right)^{-1}.
    \end{align*}
    \end{minipage}
  }
\end{restatable}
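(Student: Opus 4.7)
The plan is to reduce the two-level (composition) variance analysis to a single PAGE-type variance inequality of exactly the form appearing in Thm~\ref{theorem:pageab}, so that the rest of the proof is inherited from Thm~\ref{theorem:pageab}.

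Write $e^t \eqdef g^t - \nabla f(x^t)$, $\Delta_i^{t+1} \eqdef \nabla f_i(x^{t+1}) - \nabla f_i(x^t)$, $\bar\Delta^{t+1} \eqdef \frac{1}{n}\sum_i \Delta_i^{t+1} = \nabla f(x^{t+1}) - \nabla f(x^t)$, and $\bar h^{t+1} \eqdef \frac{1}{n}\sum_i h_i^{t+1}$. When $c^{t+1}=1$ we have $e^{t+1}=0$; when $c^{t+1}=0$, unbiasedness of both levels of sampling gives $e^{t+1}=e^t+Z^t$ with $Z^t \eqdef \samplefunc^t(\{h_i^{t+1}\}) - \bar\Delta^{t+1}$ and $\ExpCond{Z^t}{\mathcal{F}_t}=0$, so $\ExpCond{\|e^{t+1}\|^2}{\mathcal{F}_t}=(1-p)(\|e^t\|^2+\ExpCond{\|Z^t\|^2}{\mathcal{F}_t})$. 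Therefore the entire task is to bound $\Exp{\|Z^t\|^2}$ by a multiple of $\|x^{t+1}-x^t\|^2$ with constant equal to the bracket in the theorem.

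\textbf{Key step (nested variance bound).} I would first condition on the inner-sampling outputs $\{h_i^{t+1}\}$ and apply Assumption~\ref{ass:sampling} to $\samplefunc^t$, using the tower property together with the bias-variance split
\begin{align*}
\Exp{\|Z^t\|^2} \;=\; \Exp{\|\samplefunc^t(\{h_i^{t+1}\}) - \bar h^{t+1}\|^2} + \Exp{\|\bar h^{t+1} - \bar\Delta^{t+1}\|^2},
\end{align*}
the cross term vanishing in conditional expectation. Under the (standing) assumption that the inner samplings $\{\samplefunc_i^t\}_i$ are conditionally independent across $i$, the second piece equals $\frac{1}{n^2}\sum_i \Exp{\|h_i^{t+1}-\Delta_i^{t+1}\|^2}$, and $\Exp{\|\bar h^{t+1}\|^2}$ and $\Exp{\|h_i^{t+1}\|^2}$ decompose similarly. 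Collecting the coefficients multiplying $\Exp{\|h_i^{t+1}-\Delta_i^{t+1}\|^2}$ produces the key factor $\frac{A}{nw_i}+\frac{1-B}{n}$ (nonnegative thanks to the hypothesis $B\leq 1$), while collecting the terms in $\|\Delta_i^{t+1}\|^2$ and $\|\bar\Delta^{t+1}\|^2$ yields $\frac{A}{n}\sum_i \frac{\|\Delta_i^{t+1}\|^2}{nw_i} - B\|\bar\Delta^{t+1}\|^2$.

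\textbf{Applying the smoothness definitions.} For each inner term, apply Assumption~\ref{ass:sampling} to $\samplefunc_i^t$ and split $A_i X_i - B_i \|\Delta_i^{t+1}\|^2 = (A_i-B_i) X_i + B_i(X_i - \|\Delta_i^{t+1}\|^2)$ where $X_i \eqdef \frac{1}{m_i}\sum_j \frac{\|\nabla f_{ij}(x^{t+1})-\nabla f_{ij}(x^t)\|^2}{m_i w_{ij}}$; Definitions~\ref{def:weighted_local_lipschitz_constant:local} and~\ref{def:weighted_hessian_varaince:local} then give $\Exp{\|h_i^{t+1}-\Delta_i^{t+1}\|^2}\leq\bigl((A_i-B_i)L_{i,+,w_i}^2+B_i L_{i,\pm,w_i}^2\bigr)\|x^{t+1}-x^t\|^2$. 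Apply the analogous $A$–$B$ splitting to the outer $\Delta_i^{t+1}$-terms, together with Definitions~\ref{def:weighted_local_lipschitz_constant} and~\ref{def:weighted_hessian_varaince}, to obtain $((A-B)L_{+,w}^2+B L_{\pm,w}^2)\|x^{t+1}-x^t\|^2$. Summing yields
\begin{align*}
\ExpCond{\|Z^t\|^2}{\mathcal{F}_t} \;\leq\; \mathcal{L}^2\,\|x^{t+1}-x^t\|^2,
\end{align*}
where $\mathcal{L}^2$ is exactly the bracket in the theorem statement.

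\textbf{Finish via the PAGE Lyapunov argument.} At this point we have the inequality $\ExpCond{\|e^{t+1}\|^2}{\mathcal{F}_t}\leq (1-p)\|e^t\|^2 + (1-p)\mathcal{L}^2\|x^{t+1}-x^t\|^2$, which is structurally identical to the variance recursion used in the proof of Thm~\ref{theorem:pageab}. Therefore the Lyapunov function $\Phi^t \eqdef f(x^t)-f^* + \frac{\gamma}{2p}\|e^t\|^2$, combined with the $L_-$-smoothness of $f$ (Assumption~\ref{ass:lipschitz_constant}) and the update $x^{t+1}=x^t-\gamma g^t$, yields monotone descent in expectation provided $\gamma^{-1}\geq L_- + \sqrt{\tfrac{1-p}{p}\mathcal{L}^2}$, which is precisely the stated stepsize condition. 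Telescoping and picking $\widehat x^T$ uniformly from $\{x^0,\dots,x^{T-1}\}$ gives $\Exp{\|\nabla f(\widehat x^T)\|^2}\leq \frac{2\Delta_0}{\gamma T}$.

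\textbf{Main obstacle.} The only nontrivial part is the bookkeeping in the nested variance decomposition that produces the coefficient $\frac{A}{nw_i}+\frac{1-B}{n}$; the algebra must carefully track the $+1/n^2$ coming from $\Exp{\|\bar h^{t+1}-\bar\Delta^{t+1}\|^2}$, the $-B/n^2$ coming from $-B\Exp{\|\bar h^{t+1}\|^2}$, and the $A/(n^2 w_i)$ coming from the outer $AB$ bound, and rely on $B\leq 1$ to sign the result correctly. Independence of the $\samplefunc_i^t$ across $i$ is used crucially here and should be made explicit; once this is in place, everything else is mechanical and is inherited from the proof of Thm~\ref{theorem:pageab}.
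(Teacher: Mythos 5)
Your proposal is correct and follows essentially the same route as the paper: the paper isolates your "key step" as a separate lemma (Lemma~\ref{lemma:sampling}), proved by exactly the tower-property/bias-variance split, the outer $AB$ inequality, the decompositions $\Exp{\normsq{h_i}}=\Exp{\normsq{h_i-\Delta_i}}+\normsq{\Delta_i}$, and the cross-node independence that yields the coefficient $\frac{A}{nw_i}+\frac{1-B}{n}$ (with $B\leq 1$ ensuring nonnegativity), after which the argument is inherited verbatim from Theorem~\ref{theorem:pageab}. Your remark that the independence of the $\samplefunc_i^t$ across $i$ should be made explicit is well taken — the paper also relies on it, but only implicitly, in the equality $(1-B)\Exp{\normsq{\frac{1}{n}\sum_i(h_i-\Delta_i)}}=\frac{1-B}{n^2}\sum_i\Exp{\normsq{h_i-\Delta_i}}$.
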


The obtained theorem provides a general framework that helps analyze the convergence rates of the composition of samplings that satisfy Assumption~\ref{ass:sampling}. We discuss the obtained result in different contexts.

\subsection{Federated Learning}
For simplicity, let us assume that the samplings $\samplefunc^t$ and $\samplefunc_i^t$ are \textit{Uniform With Replacement} samplings with batch sizes $\tau$ and $\tau_i$ for all $i \in n$, accordingly, then to get $\varepsilon$-stationary point, it is enough to do
$
  T \eqdef \Theta\left(\frac{\Delta_0}{\varepsilon} \left(L_- + \sqrt{\frac{1 - p}{p \tau} \left(\frac{1}{n}\sum_{i = 1}^n \frac{1}{\tau_i} L_{i,\pm}^2 + L_{\pm}^2\right)}\right)\right)
$
iterations. Note that 
$
  T \geq \Theta\left(\frac{\Delta_0}{\varepsilon} \left(L_- + \sqrt{\frac{1 - p}{p \tau} L_{\pm}^2}\right)\right)
$
for all $\tau_i \geq 1$ for all $i \in [n].$ It means that after some point, there is no benefit in increasing batch sizes $\tau_i.$ In order to balance $\frac{1}{n}\sum_{i = 1}^n \frac{1}{\tau_i} L_{i,\pm}^2$ and $L_{\pm}^2,$ one can take $\tau_i = \Theta\left(\nicefrac{L_{i,\pm}^2}{L_{\pm}^2}\right).$ The constant $L_{i,\pm}^2$ captures the \textit{intra-variance} inside $i$\textsuperscript{th} node, while $L_{\pm}^2$ captures the \textit{inter-variance} between nodes. If the \textit{intra-variance} is small with respect to the \textit{inter-variance}, then our theory suggests taking small batch sizes and vice versa. 

\subsection{\samplingname{Stratified} sampling}
Let us provide another example that is closely related to \citep{zhao2014accelerating}. Let us consider \eqref{eq:main_problem} and use a variation of the \textit{Stratified} sampling \citep{zhao2014accelerating}: we split the functions $f_i$ into $g = \nicefrac{n}{m}$ groups, where $m$ is the number of functions in each group. Thus we get the problem \eqref{eq:main_problem:group} with $f(x) = \frac{1}{g} \sum_{i=1}^g \frac{1}{m} \sum_{j=1}^{m} f_{ij}(x).$
Let us assume that we always sample \textit{all} groups, thus $A = B = 0,$ and the sampling $\samplefunc_i^t$ are \textit{Nice} samplings with batch sizes $\tau_1$ for all $i \in [n].$ Applying Thm~\ref{theorem:pageab:composition}, we get the convergence rate
$
  T_{\textnormal{group}} \eqdef \Theta\left(\frac{\Delta_0}{\varepsilon} \left(L_- + \sqrt{\frac{1 - p}{p g \tau_1}\left(\frac{1}{g}\sum_{i = 1}^g L_{i,\pm}^2\right)}\right)\right).
$
At each iteration, the algorithm calculates $g \tau_1$ gradients, thus we should take $p = \frac{g \tau_1}{g \tau_1 + n}$ to get the complexity
$
  N_{\textnormal{group}} \eqdef \Theta\left(n + g \tau_1 T\right) = \Theta\left(n + \frac{\Delta_0}{\varepsilon} \left(g \tau_1 L_- + \sqrt{n}\sqrt{\frac{1}{g}\sum_{i = 1}^g L_{i,\pm}^2}\right)\right).
$
Let us take $\tau_1 \leq \max\left\{\frac{\sqrt{n} \sqrt{\frac{1}{g}\sum_{i = 1}^g L_{i,\pm}^2}}{g L_{-}}, 1\right\}$ to obtain the complexity $N_{\textnormal{group}} = \Theta\left(n + \frac{\Delta_0 \max\left\{\sqrt{n} \sqrt{\frac{1}{g}\sum_{i = 1}^g L_{i,\pm}^2}, g L_{-}\right\}}{\varepsilon}\right).$ Comparing the complexity $N_{\textnormal{group}}$ with the complexity $N_{\textnormal{uniform}}$ from Sec~\ref{sec:theorems}, one can see that if split the functions $f_i$ in a ``right way'', such that $L_{i,\pm}$ is small for $i \in [n]$ (see Example~\ref{ex:groups}), then we can get at least $\nicefrac{\sqrt{n}}{\sqrt{g}}$ times improvement with the \textit{Stratified} sampling.

\section{Experiments}
\label{sec:experiments}
We now provide experiments\footnote{Our code: \url{https://github.com/mysteryresearcher/sampling-in-optimal-sgd}} with synthetic quadratic optimization tasks, where the functions $f_i$, in general, are nonconvex quadratic functions. Note that our goal here is to check whether the dependencies that our theory predicts are correct for the problem \eqref{eq:main_problem}. The procedures that generate synthetic quadratic optimization tasks give us control over the choice of smoothness constants. All parameters of Algorithm~\ref{alg:page} are chosen as suggested in Thm~\ref{theorem:pageab} and Cor~\ref{corollary:pageab}. In the plots, we represent the relation between the norm of gradients and the number of gradient calculations $\nabla f_i$.

\subsection{Quadratic optimization tasks with various Hessian Variances $L_{\pm}$}

\label{sec:experiment:quadratic_l_pm}

Using Algorithm~\ref{algorithm:matrix_generation} (see Appendix), we generated various quadratic optimization tasks with different smoothness constants $L_{\pm} \in [0, 1.0]$ and fixed $L_{-} \approx 1.0$ (see Fig.~\ref{fig:page_ab_arxiv_2022_nodes_1000_nl_0_01_05_1_10_dim_10_wc_with_compl_more_info}). We choose $d = 10,$  $n = 1000,$  regularization $\lambda = 0.001,$ and the noise scale $s \in \{0, 0.1, 0.5, 1\}.$ According to Sec~\ref{sec:theorems} and Table~\ref{tbl:compl}, the gradient complexity of original \algname{PAGE} method (``Vanilla PAGE'' in Fig.~\ref{fig:page_ab_arxiv_2022_nodes_1000_nl_0_01_05_1_10_dim_10_wc_with_compl_more_info}) is proportional to $L_{+}.$ While the gradient complexity of the new analysis with the \samplingname{Uniform With Replacement} 
sampling (``Uniform With Replacement'' in Fig.~\ref{fig:page_ab_arxiv_2022_nodes_1000_nl_0_01_05_1_10_dim_10_wc_with_compl_more_info}) is proportional to $L_{\pm},$ which is always less or equal $L_{+}.$ In Fig.~\ref{fig:page_ab_arxiv_2022_nodes_1000_nl_0_01_05_1_10_dim_10_wc_with_compl_more_info}, one can see that the smaller $L_{\pm}$ with respect to $L_{+},$ the better the performance of ``Uniform With Replacement.''  Moreover, we provide experiments with the \samplingname{Importance} sampling (``Importance'' in Fig.~\ref{fig:page_ab_arxiv_2022_nodes_1000_nl_0_01_05_1_10_dim_10_wc_with_compl_more_info}) with $q_i = \frac{L_{i}}{\sum_{i=1}^n L_i}$ for all $i \in [n].$ This sampling has the best performance in all regimes. 

\begin{figure}[h]
  \centering
  \includegraphics[width=0.95\linewidth]{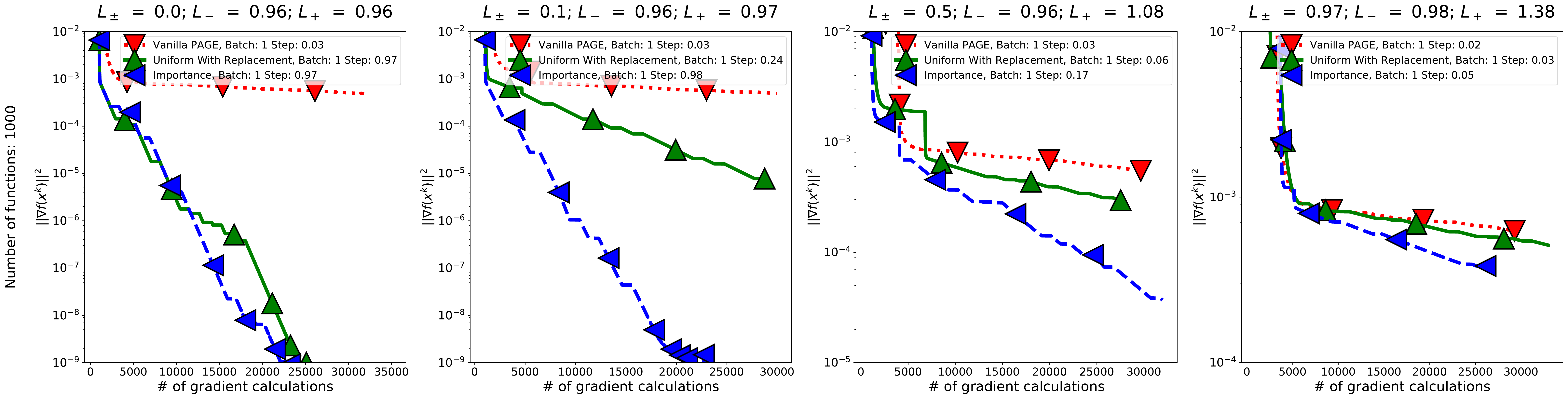}
  \caption{Comparison of samplings and methods on quadratic optimization tasks with various $L_{\pm}.$}
  \label{fig:page_ab_arxiv_2022_nodes_1000_nl_0_01_05_1_10_dim_10_wc_with_compl_more_info}
\end{figure}

\subsection{Quadratic optimization tasks with various local Lipschitz constatns $L_{i}$}
\label{sec:experiment:quadratic_l_i}

Using Algorithm~\ref{algorithm:matrix_generation_l_i} (see Appendix), we synthesized various quadratic optimization tasks with different smoothness constants $L_{i}$ (see Fig.~\ref{fig:page_ab_arxiv_2022_nodes_1000_nl_0_01_05_1_10_dim_10_with_compl_more_info}). We choose $d = 10,$ $n = 1000,$ the regularization $\lambda = 0.001,$ and the noise scale $s \in \{0, 0.1, 0.5, 10.0\}.$
We generated tasks in such way that the difference between $\max_i L_i$ and $\min_i L_i$ increases. First, one can see that the \samplingname{Uniform With Replacement} sampling with the new analysis (``Uniform With Replacement'' in Fig.~\ref{fig:page_ab_arxiv_2022_nodes_1000_nl_0_01_05_1_10_dim_10_with_compl_more_info}) has better performance even in the cases of significant variations of $L_i$. Next, we see the stability of the \samplingname{Importance} sampling (``Importance'' in Fig.~\ref{fig:page_ab_arxiv_2022_nodes_1000_nl_0_01_05_1_10_dim_10_with_compl_more_info}) with respect to this variations.

\begin{figure}[h]
  \centering
  \includegraphics[width=0.95\linewidth]{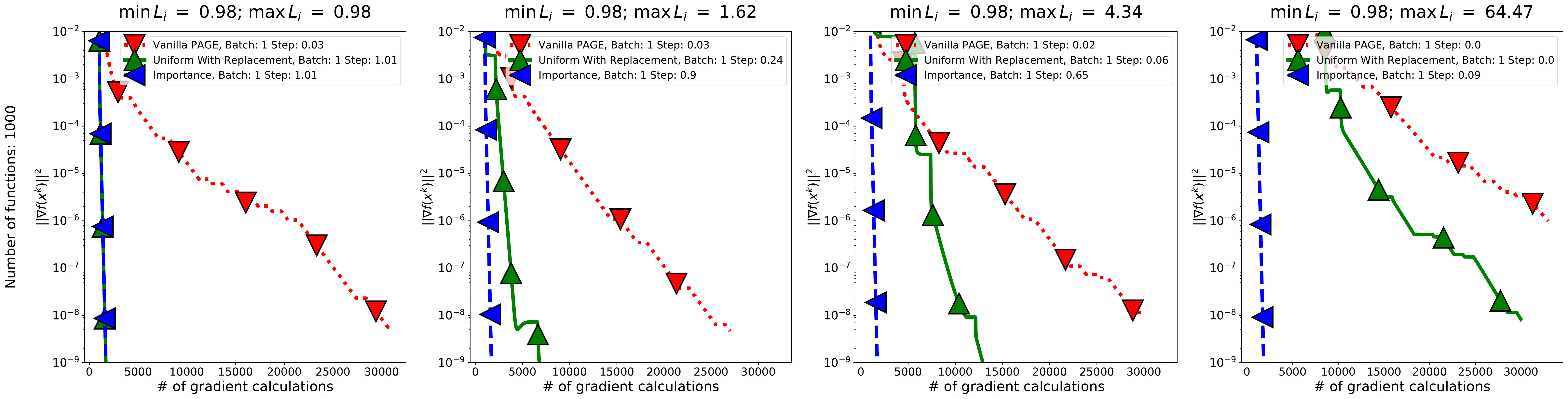}
  \caption{Comparison of samplings and methods on quadratic optimization tasks with various $L_{i}.$}
  \label{fig:page_ab_arxiv_2022_nodes_1000_nl_0_01_05_1_10_dim_10_with_compl_more_info}
\end{figure}

\subsection{Nonconvex classification problem with LIBSVM datasets}
We now solve nonconvex machine learning tasks and compare samplings on LIBSVM datasets \citep{chang2011libsvm} (see details in Sec~\ref{sec:experiment:libsvm:details}). As in previous sections, \algname{PAGE} with the \samplingname{Importance} sampling performs better, especially in the \textit{australian} dataset where the variation of $L_i$ is large.

\begin{figure}[H]
  \centering
  \begin{subfigure}{.4\textwidth}
    \centering
    \includegraphics[width=0.8\linewidth]{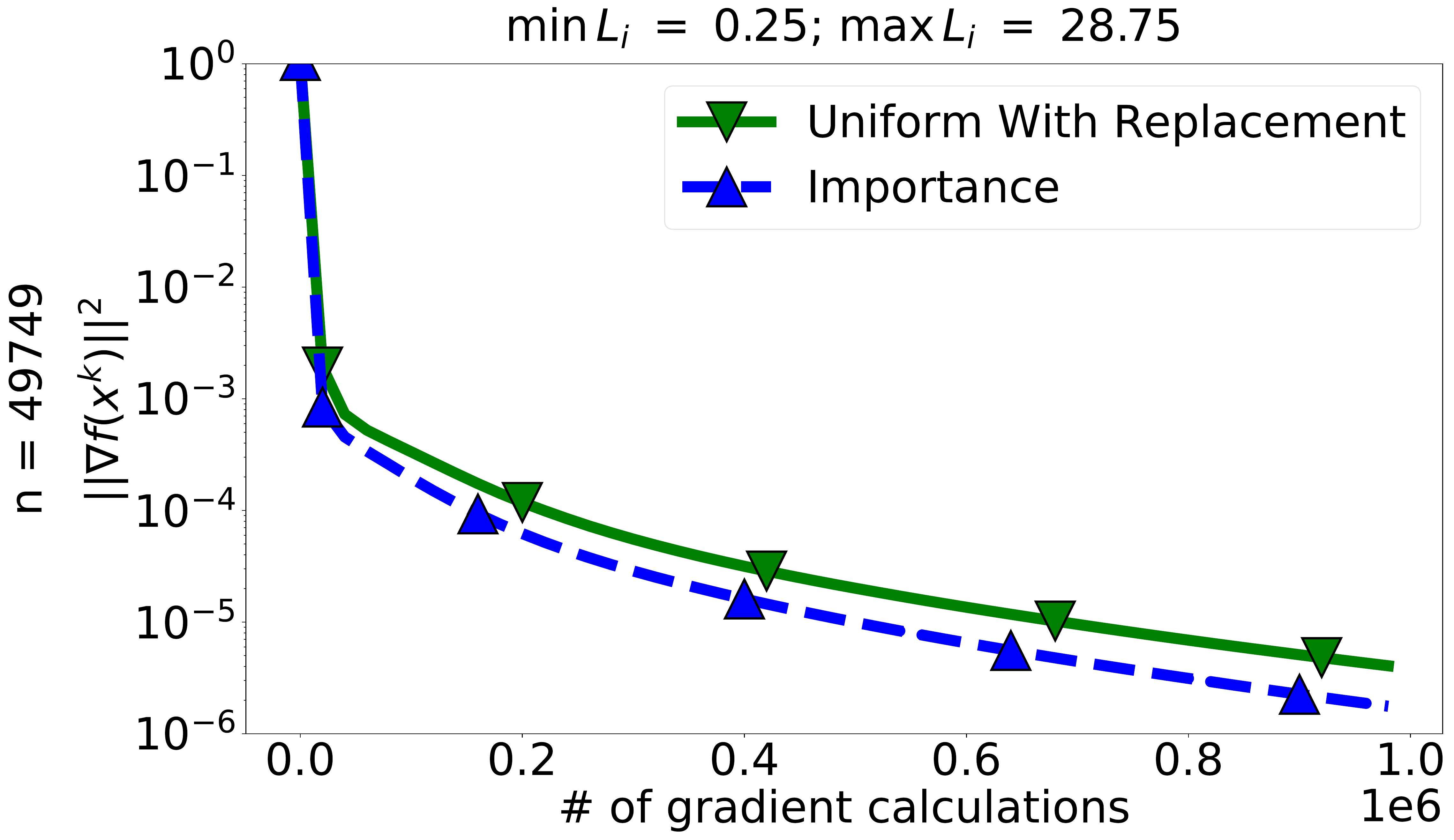}
    \caption*{\hspace{0.5cm}\textit{w8a} dataset}
  \end{subfigure}%
  \begin{subfigure}{.4\textwidth}
    \centering
    \includegraphics[width=0.8\linewidth]{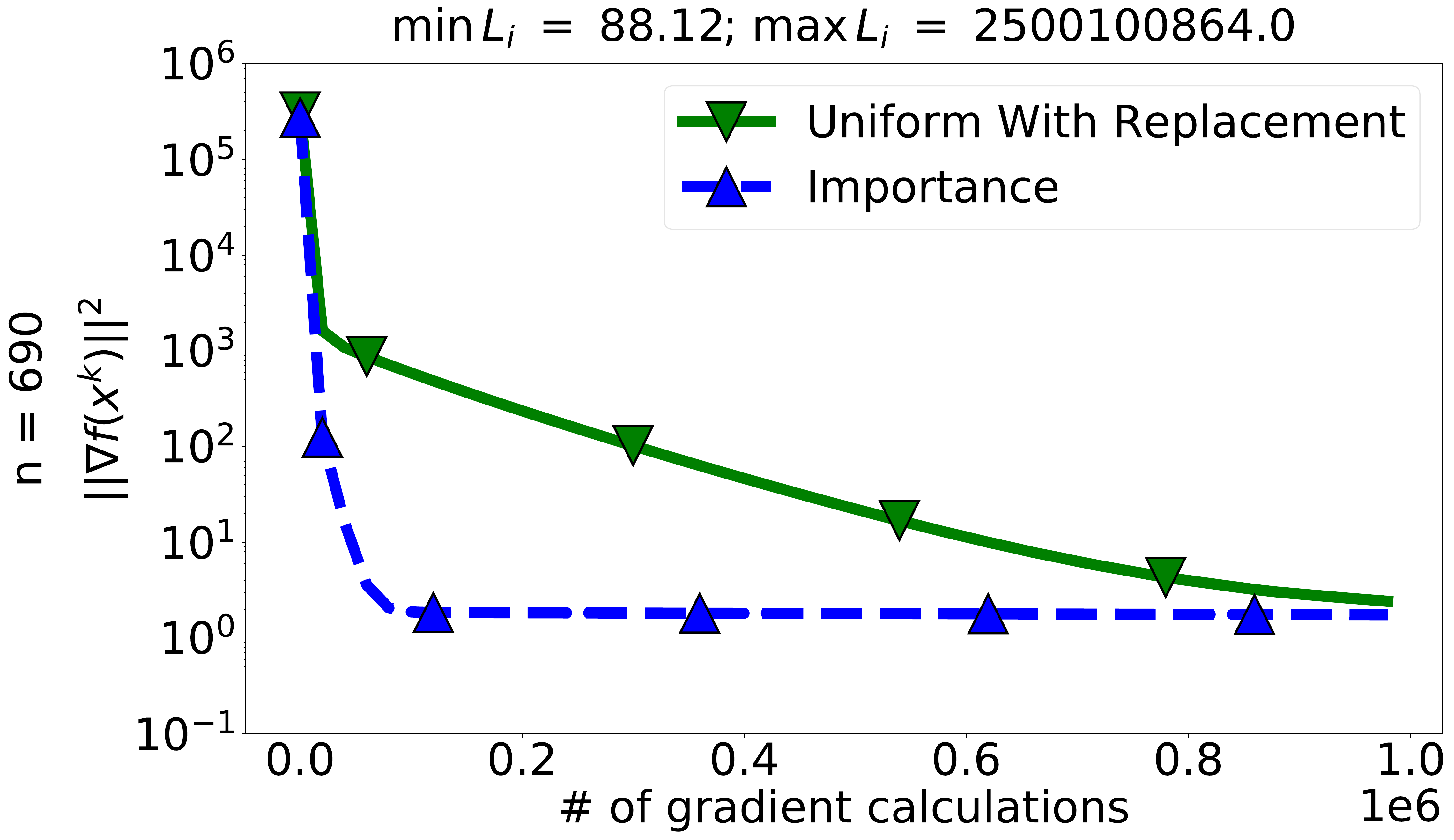}
    \caption*{\hspace{0.5cm}\textit{australian} dataset}
  \end{subfigure}
  \caption{Comparison of samplings on nonconvex machine learning tasks with LIBSVM datasets.}
  \label{fig:test}
  \end{figure}
\bibliographystyle{apalike}
\bibliography{arxiv_page_ab_2022}

\newpage

\appendix
\newpage
\tableofcontents

\newpage
\section{Extra Experiments and Details}

\subsection{Quadratic optimization tasks with various batch sizes.}
\label{sec:experiment:sythetic}

In this section, we consider the same setup as in Sec~\ref{sec:experiment:quadratic_l_pm}. In Figure~\ref{fig:page_ab_arxiv_2022_nodes_1000_nl_0_01_05_1_10_dim_10_wc_bz_more_info}, we fix $L_{\pm},$ and show that the \samplingname{Importance} sampling has better convergence rates with different batch sizes. Note that with large batches, the competitors reduce to the \algname{GD} method, and the difference is not significant.

\begin{figure}[h]
  \centering
  \includegraphics[width=0.5\textwidth]{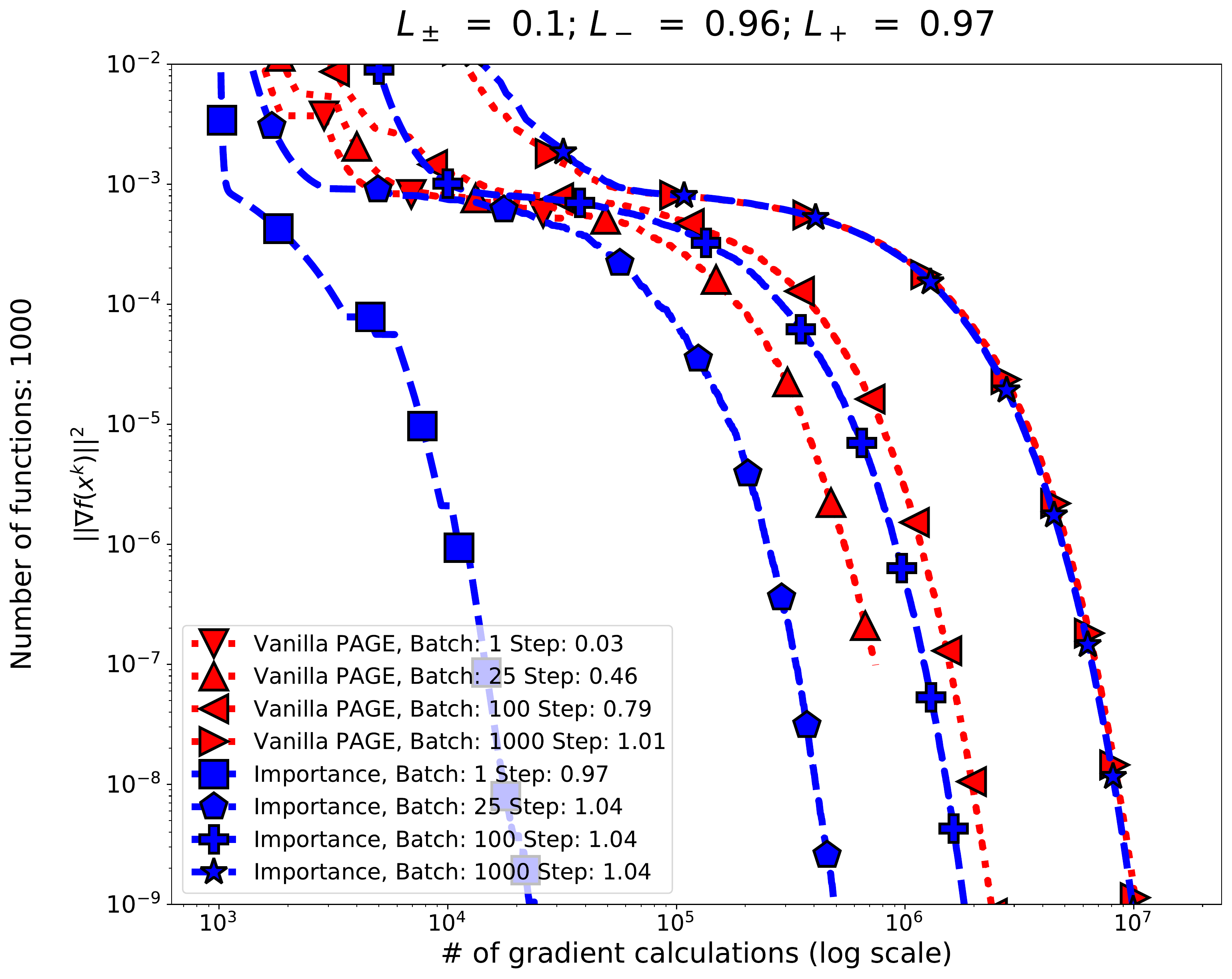}
  \caption{Comparison of samplings and methods with various batch sizes.}
  \label{fig:page_ab_arxiv_2022_nodes_1000_nl_0_01_05_1_10_dim_10_wc_bz_more_info}
\end{figure}

\subsection{Details on experiments with LIBSVM datasets}
\label{sec:experiment:libsvm:details}

\begin{figure}
	\centering
	\begin{subfigure}{.48\textwidth}
		\centering
		\includegraphics[width=1.0\linewidth]{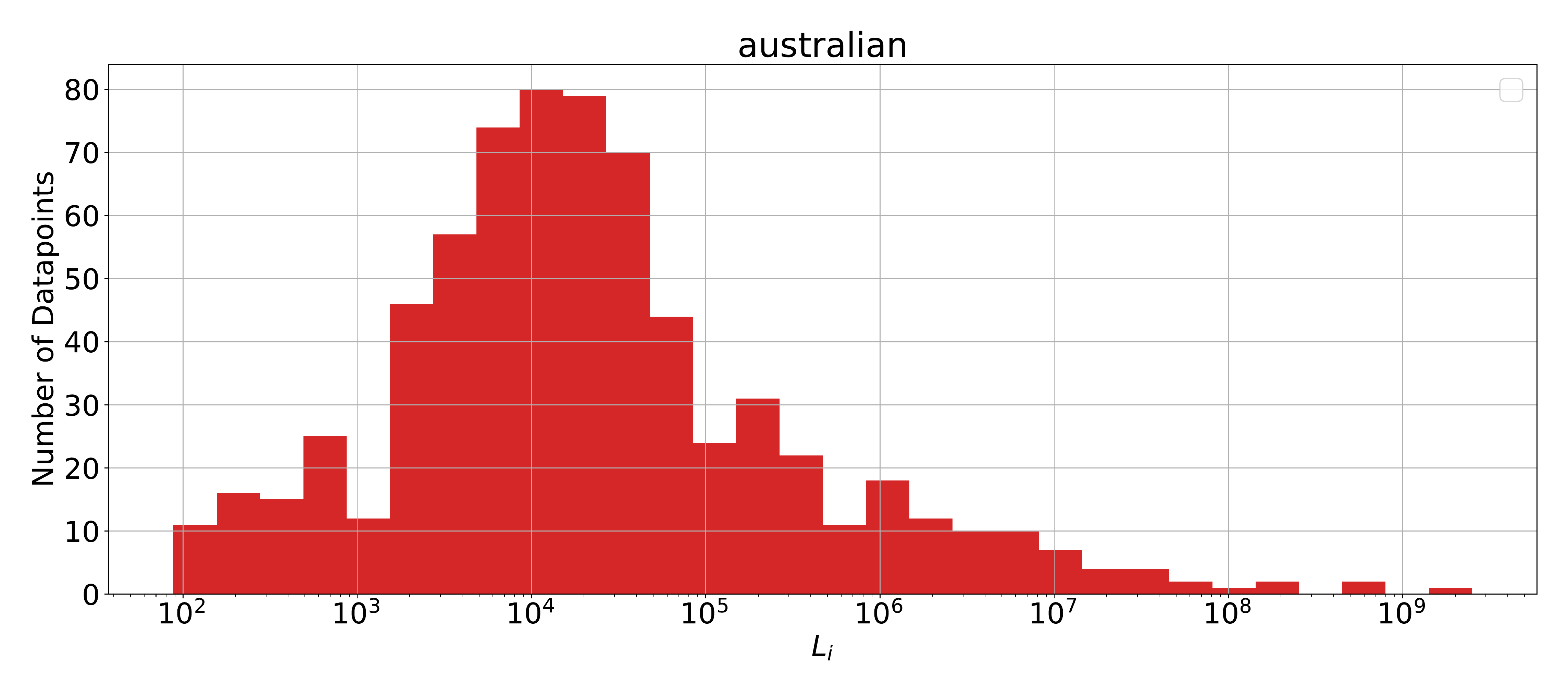}
		\caption*{}
	\end{subfigure}
	\begin{subfigure}{.48\textwidth}
		\centering
		\includegraphics[width=1.0\linewidth]{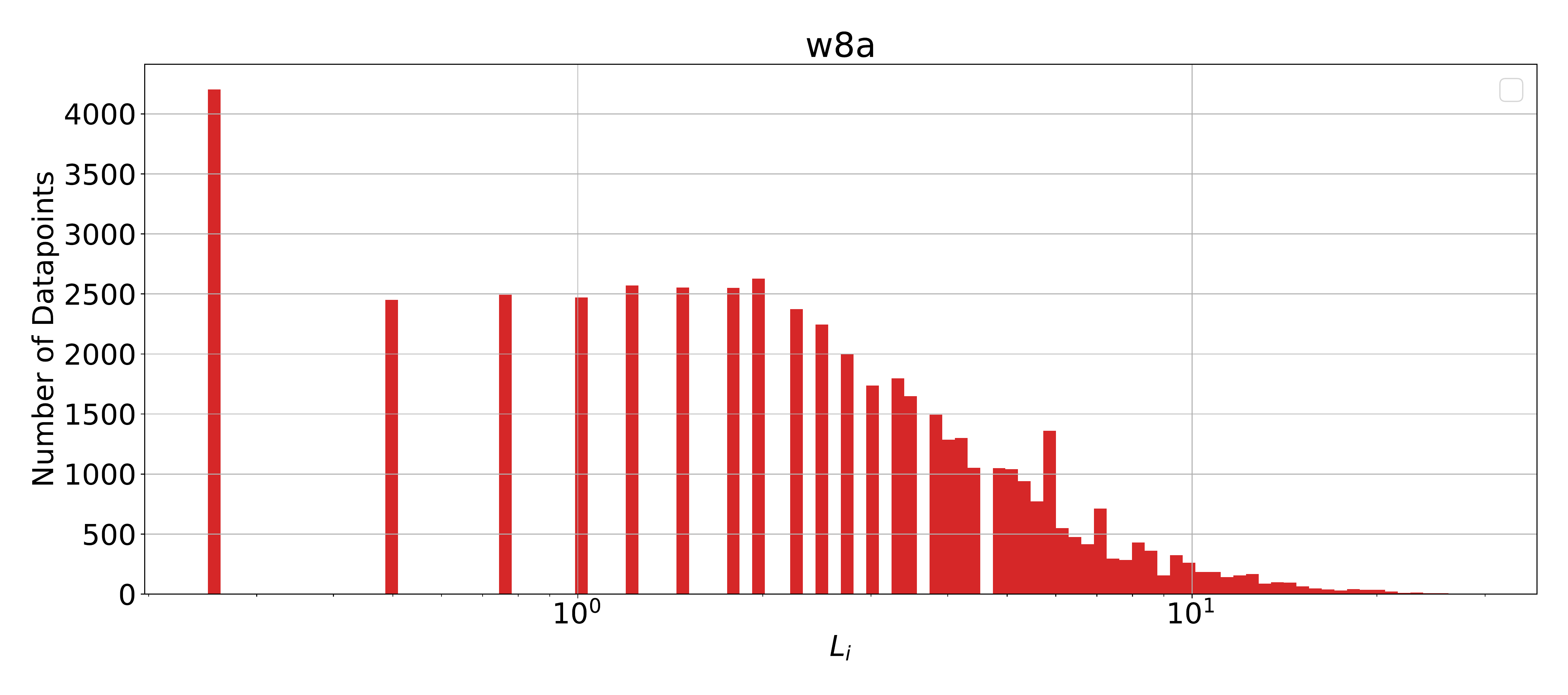}
		\caption*{}
	\end{subfigure}
	\caption{The distribution of Lipschitz constants $L_{i}$}
	\label{fig:Li_smooth_for_log_regression_w8a_and_australian}
\end{figure}

We compare the samplings on practical machine learnings with LIBSVM datasets \citep{chang2011libsvm} (under the 3-clause BSD license). Parameters of Algorithm~\ref{alg:page} are chosen as suggested in Thm~\ref{theorem:pageab} and Cor~\ref{corollary:pageab}. We take the parameters for \samplingname{Uniform With Replacement} and \samplingname{Importance} samplings from Table~\ref{tbl:AB} with $q_i = \frac{L_i}{\sum_{i=1}^n L_i}.$ We consider the logistic regression task with a nonconvex regularization \citep{wang2019spiderboost}
$$f(x_1, x_2) \eqdef \frac{1}{n}\sum_{i=1}^n \left[-\log\left(\frac{\exp\left(a_{i}^\top x_{y_{i}}\right)}{\sum_{y \in \{1, 2\}} \exp\left(a_{i}^\top x_{y}\right)}\right) + \lambda \sum_{y \in \{1, 2\}} \sum_{k = 1}^d \frac{\{x_{y}\}_k^2}{1 + \{x_{y}\}_k^2}\right] \rightarrow \min_{x_1, x_2 \in \R^d},$$
where $\{\cdot\}_k$ is an indexing operation,
$a_{i} \in \R^{d}$ is the feature of a $i$\textsuperscript{th} sample, $y_{i} \in \{1, 2\}$ is the label of a $i$\textsuperscript{th} sample, constant $\lambda = 0.001.$ We fix batch size $\tau = 1$ and take \textit{w8a} dataset (dimension $d = 300$, number of samples $n = \num[group-separator={,}]{49749}$) and \textit{australian} dataset (dimension $d = 14$, number of samples $n = \num[group-separator={,}]{690}$) from LIBSVM. For the logistic regression, the Lipschitz constants $L_i$ can be estimated. The distribution of Lipschitz constants $L_i$ across datapoints for that two datasets is presented in Fig.~\ref{fig:Li_smooth_for_log_regression_w8a_and_australian}. We use Thm~\ref{theorem:l_estimation} to obtain $L_{+,w}^2$ and $L_{\pm,w}^2$.

\subsection{Federated learning experiments with LIBSVM dataset}
\label{sec:experiment:libsvm_multi_node:details}

In this experiment\footnote{Our code: \url{https://github.com/mysteryresearcher/page_ab_fl_experiment_a3}}, we compare the \textit{Uniform With Replacement} sampling and the \textit{Importance} sampling on the logistic regression task from Sec.~\ref{sec:experiment:libsvm:details} in a distributed environment. 
The training of the models is carried on \textit{australian} dataset from LIBSVM. The dataset is reshuffled with uniform distribution, and then it is split across $n=10$ clients. 
In all experiments, we use Algorithm~\ref{alg:page:composition} with theoretical stepsizes according to Theorem \ref{theorem:pageab:composition}.
We take the parameters of the \samplingname{Uniform With Replacement} and \samplingname{Importance} samplings from Table~\ref{tbl:AB} with $q_i = \frac{L_i}{\sum_{i=1}^n L_i}.$ 

According to Algorithm~\ref{alg:page:composition}, we have the samplings $\samplefunc^t$ that sample clients, and the samplings $\samplefunc_i^t$ that sample data from the local datasets of clients.
Algorithm~\ref{alg:page:composition} allows mixed sampling strategies that satisfy Assumption~\ref{ass:sampling}. For simplicity, we consider that the samplings $\samplefunc^t$ and $\samplefunc_i^t$ are of the same type.

For the logistic regression, the Lipschitz constants $L_i$ and $L_{ij}$ of the gradients of functions $f_i(x)$ and $f_{ij}(x)$ can be estimated. As in Sec~\ref{sec:experiment:libsvm:details}, we use Thm~\ref{theorem:l_estimation} to obtain the constants $L_{i,+,w}^2,$ $L_{i,\pm,w}^2,$ $L_{+,w}^2$ and $L_{\pm,w}^2.$ The results of experiments are provided in Fig.~\ref{fig:test_multinode_australian}. 
We denote by $\tau_{points}$ the batch size of the samplings $\samplefunc_i^t$ for all $i \in [n],$ and by $\tau_{clients}$ the batch size of the sampling $\samplefunc^t.$ The number of gradient calculations in Fig.~\ref{fig:test_multinode_australian} stands for the total number of gradient calculations in all clients.

We demonstrate results for different values of the batch sizes $\tau_{clients}$ and $\tau_{points}$. 
As in previous experiments, the \textit{Importance} sampling has better empirical performance than the \samplingname{Uniform With Replacement} sampling. In addition to it, we observe that plots with small batch sizes $\tau_{points}$ converge faster.

\begin{figure}
	\centering
	\begin{subfigure}{.48\textwidth}
		\centering
		\includegraphics[width=1.0\linewidth]{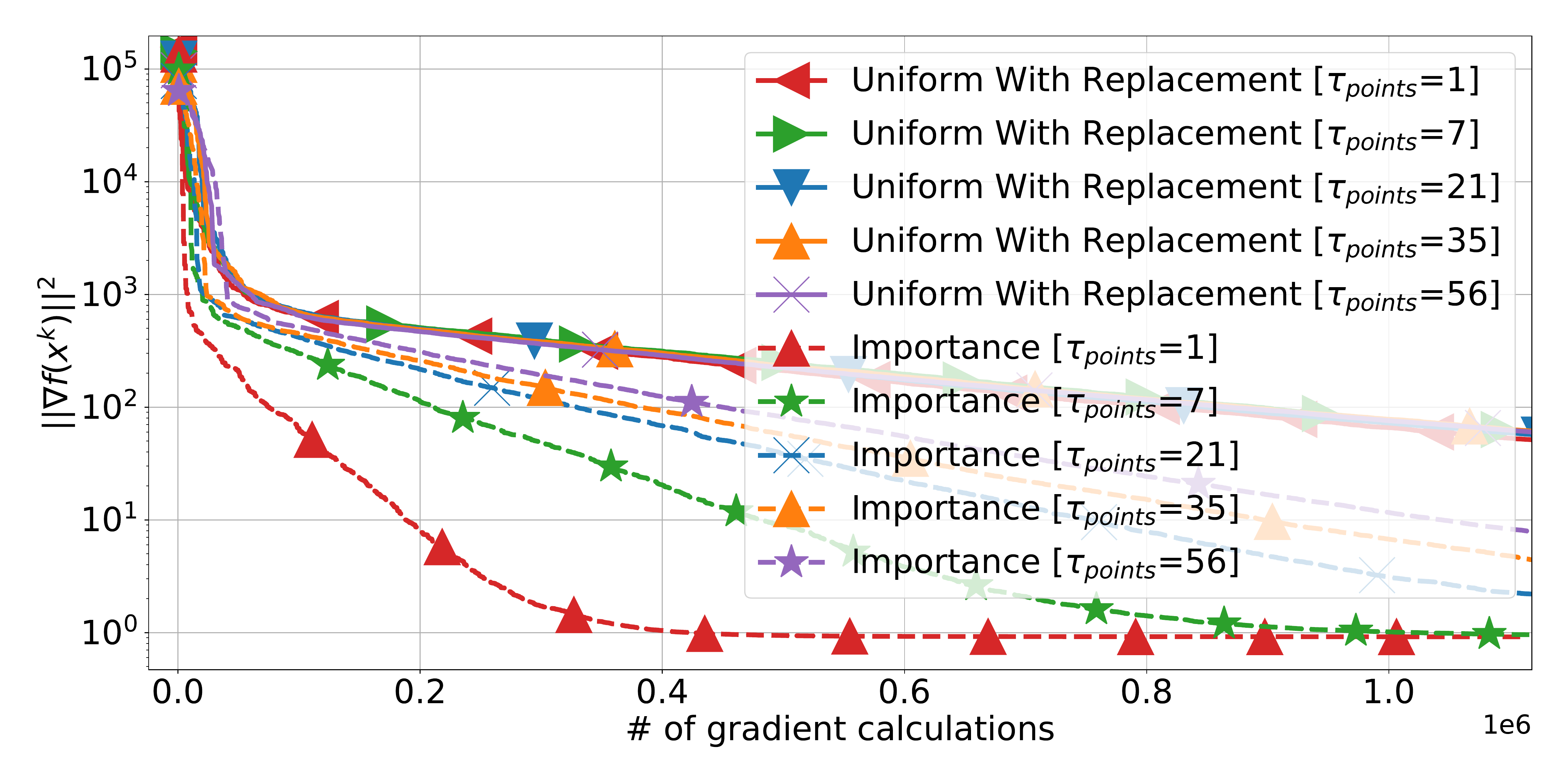}
		\caption{$\tau_{clients}=1$, \#clients $n=10$}
	\end{subfigure}
	\begin{subfigure}{.48\textwidth}
		\centering
		\includegraphics[width=1.0\linewidth]{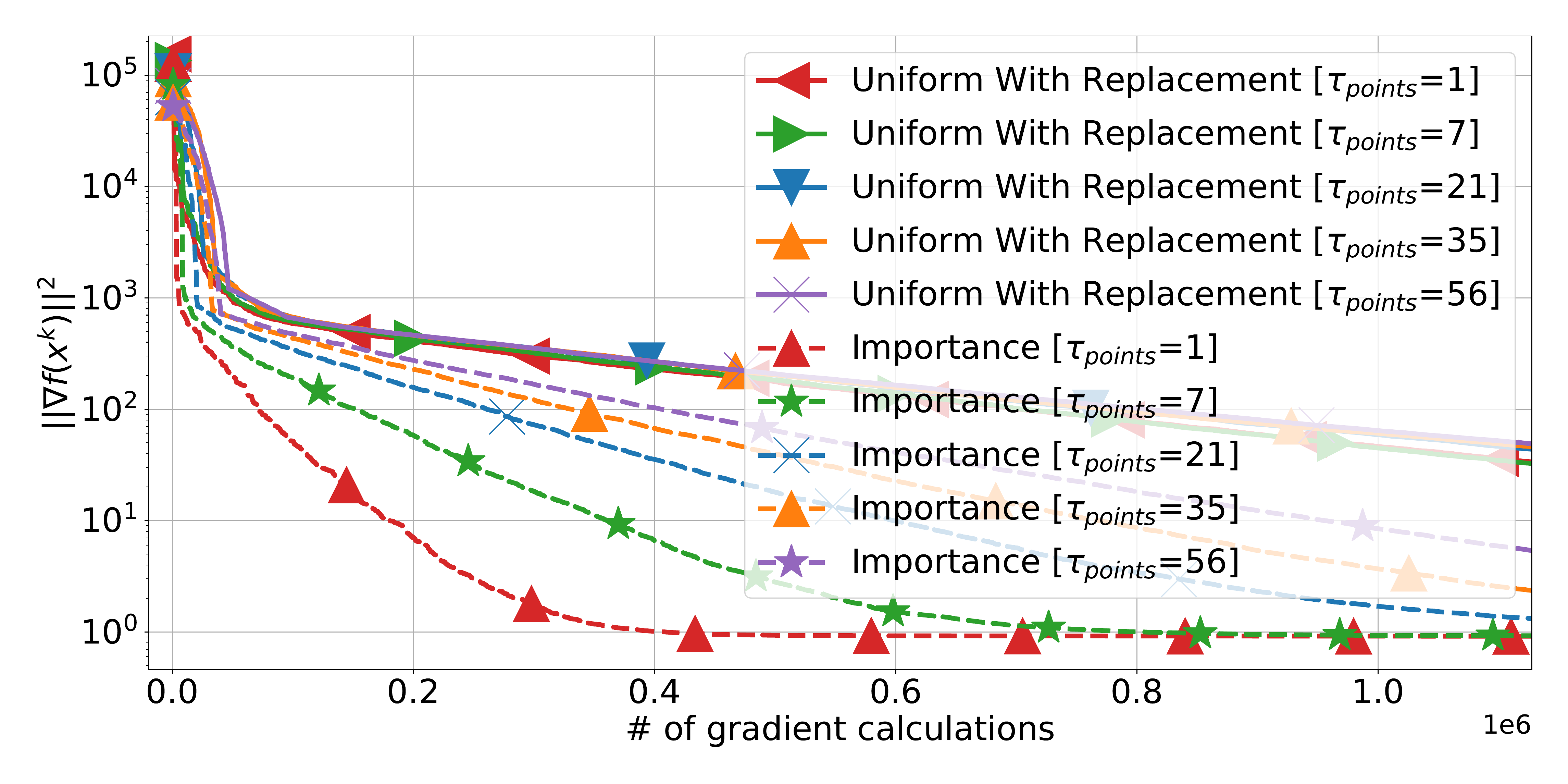}
		\caption{$\tau_{clients}=3$, \#clients $n=10$}
	\end{subfigure}

	\begin{subfigure}{.48\textwidth}
		\centering
		\includegraphics[width=1.0\linewidth]{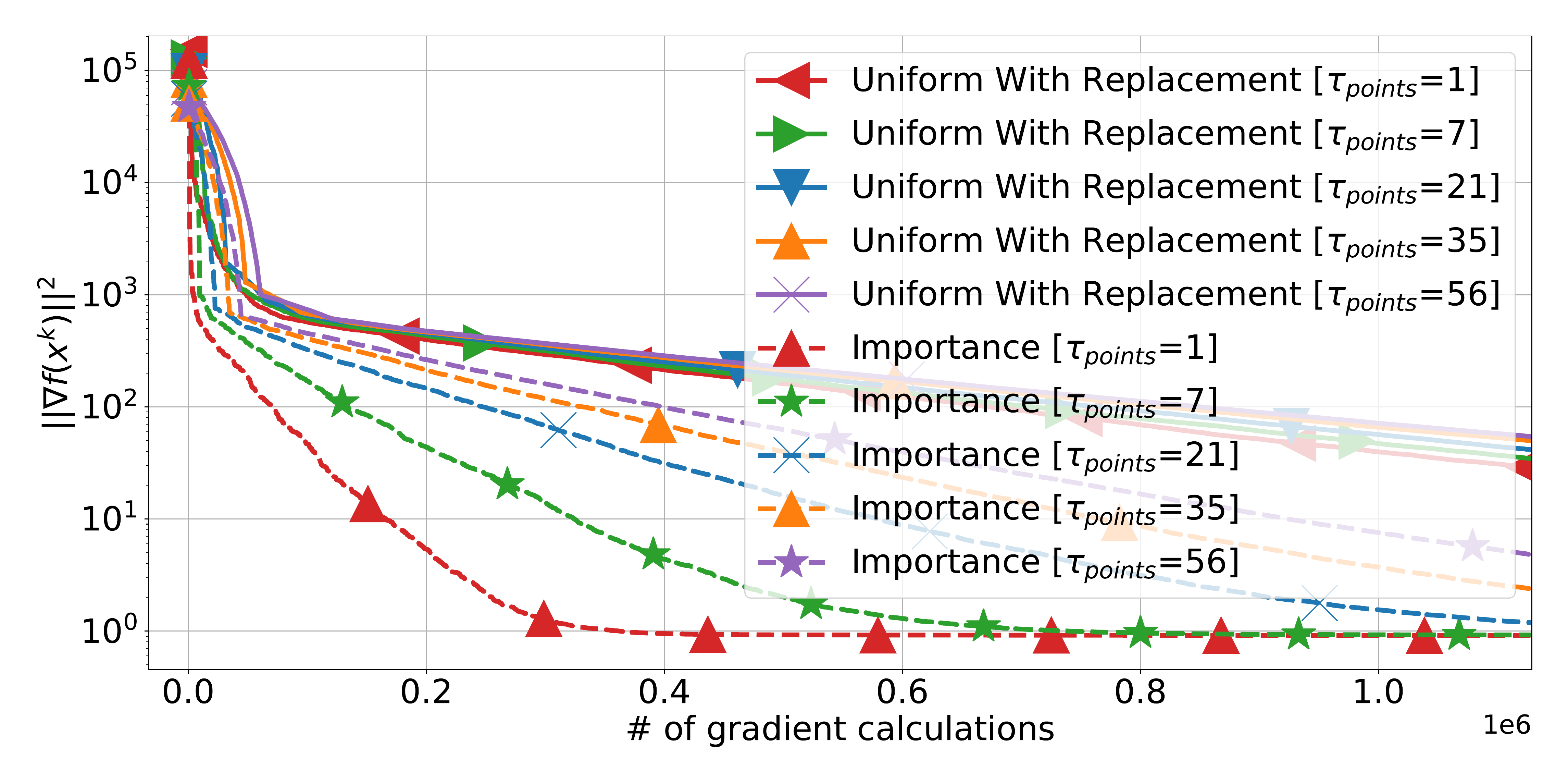}
		\caption{$\tau_{clients}=6$, \#clients $n=10$}
	\end{subfigure}
	\begin{subfigure}{.48\textwidth}
		\centering
		\includegraphics[width=1.0\linewidth]{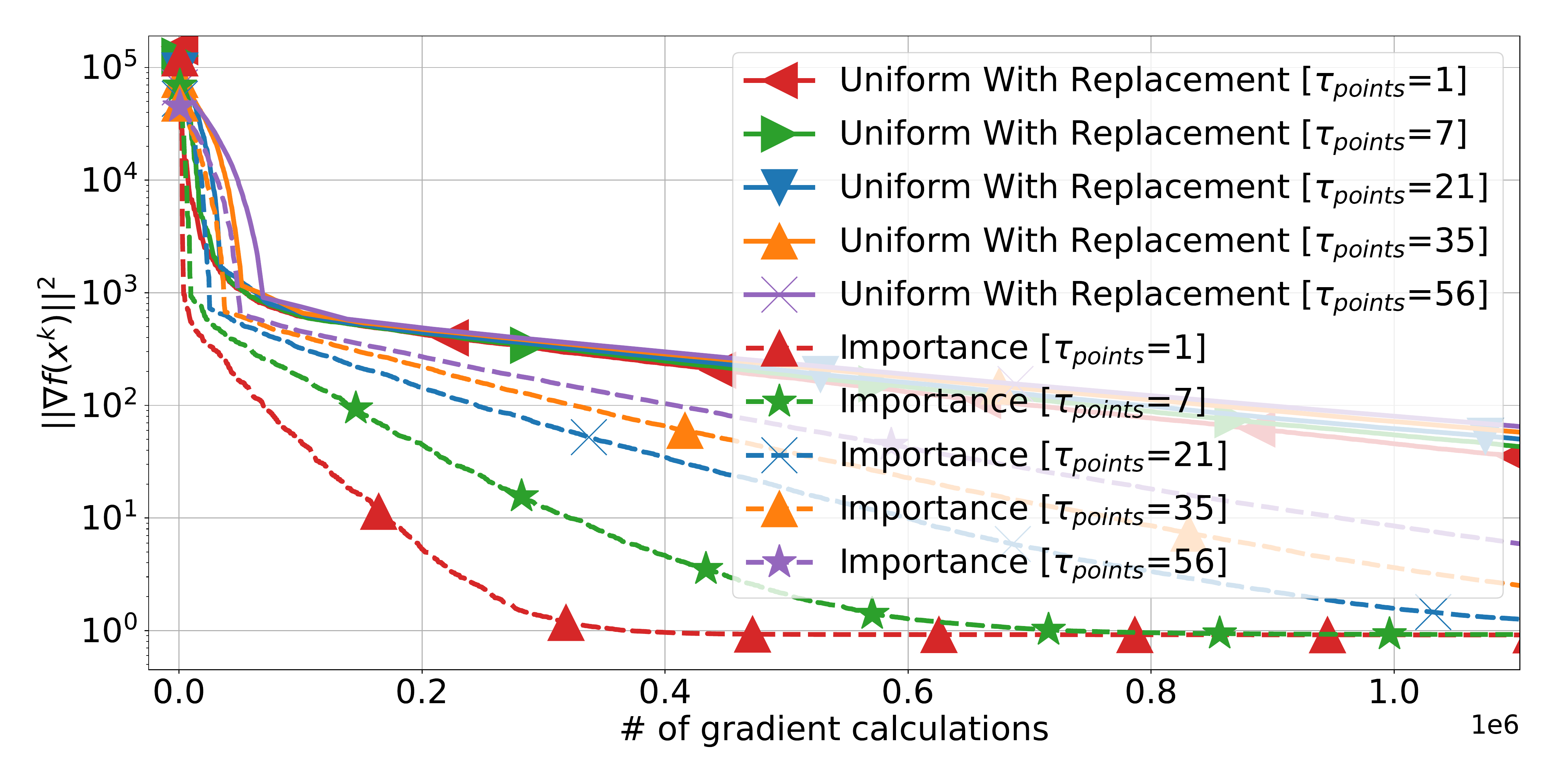}
		\caption{$\tau_{clients}=9$, \#clients $n=10$}
	\end{subfigure}
  \caption{Comparison of methods on \textit{australian} dataset from LIBSVM}
	\label{fig:test_multinode_australian}
\end{figure}

\subsection{Computing environment}
\label{sec:experiment:details}
The code was written in Python 3.6.8 using PyTorch 1.9 \citep{paszke2019pytorch} and optimization research simulator FL\_PyTorch \citep{burlachenko2021fl_pytorch}. The distributed environment was emulated on a machine with Intel(R) Xeon(R) Gold 6226R CPU @ 2.90GHz and 64 cores.

\newpage

\section{Auxiliary facts}
We use the following auxiliary fact in out proofs:
\begin{enumerate}
    \item
        Let us take a \textit{random vector} $\xi \in \R^d$, then
        \begin{align}
            \Exp{\norm{\xi}^2} = \Exp{\norm{\xi - \Exp{\xi}}^2} + \norm{\Exp{\xi}}^2.
            \label{auxiliary:variance_decomposition}
        \end{align}
\end{enumerate}

\section{Examples of Optimization Problems}

\begin{example}
  \normalfont
  \label{ex:lipt}
  For simplicity, let us assume that $n$ is even. Let us consider the optimization problem \eqref{eq:main_problem} with $f_i(x) = \frac{a}{2} x^2 + \frac{b}{2} x^2$ for $i \in \{1, \cdots, n / 2\}$ and $f_i(x) = -\frac{a}{2} x^2 + \frac{b}{2} x^2$ for $i \in \{n / 2 + 1, \cdots, n\},$ where $x \in \R$ and $b \geq 0.$ Then $f(x) = \frac{b}{2} x^2$ and $$L_{-}^2 = \sup_{x \neq y} \frac{\norm{\nabla f(x) - \nabla f(y)}^2}{\norm{x - y}^2} = b^2.$$ Moreover, 
  $$L_{+}^2 = \sup_{x \neq y} \frac{\frac{1}{n}\sum_{i=1}^n\norm{\nabla f_i(x) - \nabla f_i(y)}^2}{\norm{x - y}^2} = \frac{1}{2}\left((a + b)^2 + (a - b)^2\right),$$
  and we can take $a$ arbitrary large.
\end{example}

\begin{example}
  \normalfont
  \label{ex:lipt_sum}
  Let us assume that $n \geq 2$ and consider the optimization problem \eqref{eq:main_problem} with $f_1(x) = \frac{b}{2} x^2$  and $f_i(x) = 0$ for $i \in \{2, \cdots, n\},$ where $x \in \R$ and $b \geq 0.$ Then $f(x) = \frac{b}{2 n} x^2,$$$L_{-} = \sup_{x \neq y} \frac{\norm{\nabla f(x) - \nabla f(y)}}{\norm{x - y}} = \frac{b}{n},$$
  $$\frac{1}{n}\sum_{i=1}^n L_i = \frac{1}{n}\sup_{x \neq y} \frac{\norm{\nabla f_1(x) - \nabla f_1(y)}^2}{\norm{x - y}^2} = \frac{b}{n},$$
  and
  $$L_{+} = \sqrt{\sup_{x \neq y} \frac{\frac{1}{n}\sum_{i=1}^n\norm{\nabla f_i(x) - \nabla f_i(y)}^2}{\norm{x - y}^2}} = \frac{b}{\sqrt{n}}.$$
\end{example}

\begin{example}
  \normalfont
  \label{ex:groups}
  Let us consider the optimization problem \eqref{eq:main_problem:group} with $f(x) = \frac{1}{g} \sum_{i=1}^g \frac{1}{m} \sum_{j=1}^{m} f_{ij}(x)$ and $f_{ij}(x) = \frac{b_i}{2} x^2$ for all $i \in [g]$ and $j \in [m]$, where $x \in \R$ and $b_1 \geq 0$ and $b_i = 0$ for all $i \in \{2, \dots, g\}.$
  Then $f(x) = \frac{b_1}{2 g} x^2,$$$L_{-} = \sup_{x \neq y} \frac{\norm{\nabla f(x) - \nabla f(y)}}{\norm{x - y}} = \frac{b_1}{g},$$
  $$L_{\pm}^2 = \sup_{x \neq y} \frac{\frac{1}{g m}\sum_{i=1}^g \sum_{j=1}^m \norm{\nabla f_{ij}(x) - \nabla f_{ij}(y)}^2 - \norm{\nabla f(x) - \nabla f(y)}^2}{\norm{x - y}^2} = \left(\frac{1}{g} - \frac{1}{g^2}\right) b_1^2,$$
  and 
  $$L_{i,\pm}^2 = \sup_{x \neq y} \frac{\frac{1}{m}\sum_{j=m}^n\norm{\nabla f_{ij}(x) - \nabla f_{ij}(y)}^2 - \norm{\nabla f_i(x) - \nabla f_i(y)}^2}{\norm{x - y}^2} = 0 \quad \forall i \in [n].$$
  
  Substituting the smoothness constants to the complexity $N_{\textnormal{uniform}}$ from Sec~\ref{sec:theorems} and $N_{\textnormal{uniform}}$ from Sec~\ref{sec:sampling_in_sampling}, one can show that
  $$N_{\textnormal{uniform}} = \Theta\left(n + \frac{\Delta_0\max\{\sqrt{n} L_{\pm}, L_{-}\}}{\varepsilon}\right) = \Theta\left(n + \frac{\Delta_0\sqrt{n} b_1}{\varepsilon \sqrt{g}}\right)$$
  and
  $$N_{\textnormal{group}} = \Theta\left(n + \frac{\Delta_0 \max\left\{\sqrt{n} \sqrt{\frac{1}{g}\sum_{i = 1}^g L_{i,\pm}^2}, g L_{-}\right\}}{\varepsilon}\right) = \Theta\left(n + \frac{\Delta_0 b_1}{\varepsilon}\right).$$ The complexity $N_{\textnormal{group}}$ is $\nicefrac{\sqrt{n}}{\sqrt{g}}$ times better than the complexity $N_{\textnormal{uniform}}.$
\end{example}

\section{Missing Proofs}

	\begin{lemma}\label{descentlemma}
	Suppose that Assumption~\ref{ass:lipschitz_constant} holds and let $x^{t+1} = x^{t}-\gamma g^{t} .$ Then for any $g^{t} \in \mathbb{R}^{d}$ and $\gamma>0$, we have
	\begin{equation}\label{eq:14frompage}
		f(x^{t+1}) \leq f(x^{t})-\frac{\gamma}{2}\norm{\nabla f(x^{t})}^{2}-\left(\frac{1}{2 \gamma}-\frac{L_{-}}{2}\right)\norm{x^{t+1}-x^{t}}^{2}+\frac{\gamma}{2}\norm{g^{t}-\nabla f(x^{t})}^{2}.
	\end{equation}
\end{lemma}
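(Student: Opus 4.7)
The proof plan is a direct application of the standard descent lemma for $L_-$-smooth functions, followed by an algebraic rearrangement that exposes the mismatch between the update direction $g^t$ and the true gradient $\nabla f(x^t)$.

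First, I would invoke Assumption~\ref{ass:lipschitz_constant}, which gives the standard quadratic upper bound
\begin{equation*}
f(x^{t+1}) \leq f(x^{t}) + \inner{\nabla f(x^t)}{x^{t+1}-x^t} + \frac{L_-}{2}\norm{x^{t+1}-x^t}^2.
\end{equation*}
The goal is then to rewrite the inner product so that it produces the three target terms: $-\tfrac{\gamma}{2}\norm{\nabla f(x^t)}^2$, a contribution of $-\tfrac{1}{2\gamma}\norm{x^{t+1}-x^t}^2$ that will combine with $\tfrac{L_-}{2}\norm{x^{t+1}-x^t}^2$ to yield the bracketed coefficient, and the error term $\tfrac{\gamma}{2}\norm{g^t - \nabla f(x^t)}^2$.

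The key step is to use the update rule $x^{t+1}-x^t = -\gamma g^t$ and apply the polarization identity $2\inner{a}{b} = \norm{a}^2 + \norm{b}^2 - \norm{a-b}^2$ with $a = \nabla f(x^t)$ and $b = g^t$. Namely,
\begin{equation*}
\inner{\nabla f(x^t)}{x^{t+1}-x^t} = -\gamma \inner{\nabla f(x^t)}{g^t} = -\frac{\gamma}{2}\norm{\nabla f(x^t)}^2 - \frac{\gamma}{2}\norm{g^t}^2 + \frac{\gamma}{2}\norm{g^t - \nabla f(x^t)}^2,
\end{equation*}
and then I would rewrite $\tfrac{\gamma}{2}\norm{g^t}^2 = \tfrac{1}{2\gamma}\norm{x^{t+1}-x^t}^2$ using the update rule once more.

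Substituting this decomposition back into the descent inequality and collecting the two $\norm{x^{t+1}-x^t}^2$ terms yields exactly \eqref{eq:14frompage}. There is no real obstacle here; the lemma is purely deterministic (no expectations, no sampling), so the only thing to be careful about is the sign bookkeeping when applying the polarization identity and making sure the $-\tfrac{1}{2\gamma}$ coefficient correctly combines with $+\tfrac{L_-}{2}$ to give the stated $-(\tfrac{1}{2\gamma} - \tfrac{L_-}{2})$ factor.
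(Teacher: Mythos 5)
Your proposal is correct and follows essentially the same route as the paper: apply the quadratic upper bound from Assumption~\ref{ass:lipschitz_constant}, then expand $-\gamma\inner{\nabla f(x^t)}{g^t}$ via the polarization identity and convert $\frac{\gamma}{2}\norm{g^t}^2$ into $\frac{1}{2\gamma}\norm{x^{t+1}-x^t}^2$ using the update rule. The sign bookkeeping in your decomposition checks out, so nothing is missing.
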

\begin{proof}
  Using Assumption~\ref{ass:lipschitz_constant}, we have 
  \begin{align*}
    f(x^{t+1}) &\leq f(x^t) + \inp{\nabla f(x^t)}{x^{t+1} - x^{t}} + \frac{L_{-}}{2} \norm{x^{t+1} - x^{t}}^2 \\
    &= f(x^t) - \gamma \inp{\nabla f(x^t)}{g^t} + \frac{L_{-}}{2} \norm{x^{t+1} - x^{t}}^2.
  \end{align*}
  Next, due to $-\inp{x}{y} = \frac{1}{2}\norm{x - y}^2 - \frac{1}{2}\norm{x}^2 - \frac{1}{2}\norm{y}^2,$ we obtain 
  \begin{align*}
    f(x^{t+1}) \leq f(x^t) -\frac{\gamma}{2} \norm{\nabla f(x^t)}^2 - \left(\frac{1}{2\gamma} - \frac{L_{-}}{2}\right) \norm{x^{t+1} - x^{t}}^2 + \frac{\gamma}{2}\norm{g^t - \nabla f(x^t)}^2.
  \end{align*}
\end{proof}

\THEOREMCONVERGENCEPAGE*

\begin{proof}
	We start with the estimation of the variance of the noise:
	\begin{align*}
		&\Exp{\norm{g^{t+1} - \nabla f(x^{t+1})}^2} \\
		&= (1 - p) \Exp{\norm{g^t + \samplefunc^t\left(\{\nabla f_i(x^{t+1}) - \nabla f_i(x^{t})\}_{i=1}^n\right) - \nabla f(x^{t+1})}^2} \\
		&= (1 - p) \norm{\samplefunc^t\left(\{\nabla f_i(x^{t+1}) - \nabla f_i(x^{t})\}_{i=1}^n\right) - \left(\nabla f(x^{t+1}) - \nabla f(x^{t})\right)}^2 + (1 - p) \norm{g^t - \nabla f(x^{t})}^2,
	\end{align*}
  where we used the unbiasedness of the sampling.
	Using Assumption~\ref{ass:sampling}, we have
	\begin{align*}
		&\Exp{\norm{g^{t+1} - \nabla f(x^{t+1})}^2} \\
		&\leq (1 - p) \left(A \sum_{i = 1}^n \frac{1}{n^2w_i}\norm{\nabla f_i(x^{t+1}) - \nabla f_i(x^{t})}^2 - B \norm{\nabla f(x^{t+1}) - \nabla f(x^{t})}^2\right)\\
		&\quad + (1 - p) \norm{g^t - \nabla f(x^{t})}^2.
	\end{align*}
	Using the definition of $L_{+,w}$ and $L_{\pm,w}$, we get
	\begin{equation}\label{eq:15frompage}
		\begin{aligned}
			&\Exp{\norm{g^{t+1} - \nabla f(x^{t+1})}^2} \\
			&\leq (1 - p) \left(A \sum_{i = 1}^n\frac{1}{n^2w_i} \norm{\nabla f_i(x^{t+1}) - \nabla f_i(x^{t})}^2 - B \norm{\nabla f(x^{t+1}) - \nabla f(x^{t})}^2\right)\\
			&\quad + (1 - p) \norm{g^t - \nabla f(x^{t})}^2 \\
			&= (1 - p) \Bigg(\left(A - B\right) \left(\sum_{i = 1}^n \frac{1}{n^2w_i}\norm{\nabla f_i(x^{t+1}) - \nabla f_i(x^{t})}^2\right) \\
			&\quad + B \left(\sum_{i = 1}^n \frac{1}{n^2w_i}\norm{\nabla f_i(x^{t+1}) - \nabla f_i(x^{t})}^2 - \norm{\nabla f(x^{t+1}) - \nabla f(x^{t})}^2\right)\Bigg)\\
			&\quad + (1 - p) \norm{g^t - \nabla f(x^{t})}^2 \\
			&\leq (1 - p) \left(\left(A - B\right) L_{+,w}^2 + B L_{\pm,w}^2 \right)\norm{x^{t+1} - x^t}^2 + (1 - p) \norm{g^t - \nabla f(x^{t})}^2.
		\end{aligned}
	\end{equation}

	We now continue the proof using Lemma \ref{descentlemma}. We add \eqref{eq:14frompage} with $\frac{\gamma}{2p}\times$ \eqref{eq:15frompage}, and take expectation to get 
	\begin{equation}\label{eq:19frompage}
		\begin{aligned}
			&\Exp{f(x^{t+1})-f^{*}+\frac{\gamma}{2p}\normsq{g^{t+1}-\nabla f(x^{t+1})}}\\
			&\leq \Exp{f\left(x^{t}\right)-f^{*}-\frac{\gamma}{2}\left\|\nabla f\left(x^{t}\right)\right\|^{2}-\left(\frac{1}{2 \gamma}-\frac{L_{-}}{2}\right)\left\|x^{t+1}-x^{t}\right\|^{2}+\frac{\gamma}{2}\left\|g^{t}-\nabla f\left(x^{t}\right)\right\|^{2}} \\
			&\quad+\frac{\gamma}{2 p} \Exp{(1-p)\left\|g^{t}-\nabla f\left(x^{t}\right)\right\|^{2}+(1 - p) \left(\left(A - B\right) L_{+,w}^2 + B L_{\pm,w}^2 \right)\left\|x^{t+1}-x^{t}\right\|^{2}} \\
			&=\Exp{f\left(x^{t}\right)-f^{*}+\frac{\gamma}{2 p}\left\|g^{t}-\nabla f\left(x^{t}\right)\right\|^{2}-\frac{\gamma}{2}\left\|\nabla f\left(x^{t}\right)\right\|^{2}\right.\\
				&\left.\quad-\left(\frac{1}{2 \gamma}-\frac{L_{-}}{2}-\frac{(1-p)\gamma}{2 p}\left(\left(A - B\right) L_{+,w}^2 + B L_{\pm,w}^2 \right)\right)\left\|x^{t+1}-x^{t}\right\|^{2}}\\
			&\leq \Exp{f\left(x^{t}\right)-f^{*}+\frac{\gamma}{2 p}\left\|g^{t}-\nabla f\left(x^{t}\right)\right\|^{2}-\frac{\gamma}{2}\left\|\nabla f\left(x^{t}\right)\right\|^{2}},
		\end{aligned}
	\end{equation}
	where the last inequality holds due to $\frac{1}{2 \gamma}-\frac{L_{-}}{2}-\frac{(1-p)\gamma}{2 p}\left(\left(A - B\right) L_{+,w}^2 + B L_{\pm,w}^2 \right) \geq 0$ by choosing stepsize
	$$
	\gamma \leq \left(L_- + \sqrt{\frac{1 - p}{p} \left(\left(A - B\right)L_{+,w}^2 + B L_{\pm,w}^2\right)}\right)^{-1}.
	$$
	Now, if we define $\Phi_{t} \eqdef f\left(x^{t}\right)-f^{*}+\frac{\gamma}{2 p}\left\|g^{t}-\nabla f\left(x^{t}\right)\right\|^{2}$, then \eqref{eq:19frompage} can be written in the form
	$$
	\Exp{\Phi_{t+1}}\leq \Exp{\Phi_{t}}-\frac{\gamma}{2} \Exp{\left\|\nabla f\left(x^{t}\right)\right\|^{2}}.
	$$
	Summing up from $t=0$ to $T-1$, we get
	$$
	\Exp{\Phi_{T}} \leq \Exp{\Phi_{0}}-\frac{\gamma}{2} \sum_{t=0}^{T-1} \Exp{\left\|\nabla f\left(x^{t}\right)\right\|^{2}}.
	$$
	Then according to the output of the algorithm, i.e., $\widehat{x}_{T}$ is randomly chosen from $\left\{x^{t}\right\}_{t \in[T]}$ and $\Phi_{0}=f\left(x^{0}\right)-f^{*}+\frac{\gamma}{2 p} \| g^{0}-$ $\nabla f\left(x^{0}\right) \|^{2}=f\left(x^{0}\right)-f^{*} \stackrel{\text { def }}{=} \Delta_{0}$, we have
	$$
	\Exp{\left\|\nabla f\left(\widehat{x}_{T}\right)\right\|^{2}}\leq \frac{2 \Delta_{0}}{\gamma T}.
	$$
\end{proof}

\THEOREMCONVERGENCEPAGEPL*

\begin{proof}
	From the proof of Thm~\ref{theorem:pageab}, we know that
  \begin{align}
    &\Exp{\norm{g^{t+1} - \nabla f(x^{t+1})}^2} \nonumber \\
    &\leq (1 - p) \left(\left(A - B\right) L_{+,w}^2 + B L_{\pm,w}^2 \right)\norm{x^{t+1} - x^t}^2 + (1 - p) \norm{g^t - \nabla f(x^{t})}^2.
    \label{eq:47frompage}
  \end{align}

	Using Lemma \ref{descentlemma}, we add \eqref{eq:14frompage} with $\frac{\gamma}{p}\times$ \eqref{eq:47frompage}, and take expectation to get 
	\begin{equation*}
		\begin{aligned}
			&\Exp{f(x^{t+1})-f^{*}+\frac{\gamma}{p}\normsq{g^{t+1}-\nabla f(x^{t+1})}}\\
			&\leq \Exp{f\left(x^{t}\right)-f^{*}-\frac{\gamma}{2}\left\|\nabla f\left(x^{t}\right)\right\|^{2}-\left(\frac{1}{2 \gamma}-\frac{L_{-}}{2}\right)\left\|x^{t+1}-x^{t}\right\|^{2}+\frac{\gamma}{2}\left\|g^{t}-\nabla f\left(x^{t}\right)\right\|^{2}} \\
			&\quad+\frac{\gamma}{p} \Exp{(1-p)\left\|g^{t}-\nabla f\left(x^{t}\right)\right\|^{2}+(1 - p) \left(\left(A - B\right) L_{+,w}^2 + B L_{\pm,w}^2 \right)\left\|x^{t+1}-x^{t}\right\|^{2}} \\
			&=\Exp{f\left(x^{t}\right)-f^{*}+\left(1 - \frac{p}{2}\right)\frac{\gamma}{p}\left\|g^{t}-\nabla f\left(x^{t}\right)\right\|^{2}-\frac{\gamma}{2}\left\|\nabla f\left(x^{t}\right)\right\|^{2}\right.\\
				&\left.\quad-\left(\frac{1}{2 \gamma}-\frac{L_{-}}{2}-\frac{(1-p)\gamma}{p}\left(\left(A - B\right) L_{+,w}^2 + B L_{\pm,w}^2 \right)\right)\left\|x^{t+1}-x^{t}\right\|^{2}}\\
			&\leq \Exp{f\left(x^{t}\right)-f^{*}+\left(1 - \frac{p}{2}\right)\frac{\gamma}{p}\left\|g^{t}-\nabla f\left(x^{t}\right)\right\|^{2}-\frac{\gamma}{2}\left\|\nabla f\left(x^{t}\right)\right\|^{2}},
		\end{aligned}
	\end{equation*}
	where the last inequality holds due to $\frac{1}{2 \gamma}-\frac{L_{-}}{2}-\frac{(1-p)\gamma}{p}\left(\left(A - B\right) L_{+,w}^2 + B L_{\pm,w}^2 \right) \geq 0$ by choosing stepsize
	$$
	\gamma \leq \left(L_- + \sqrt{\frac{2(1 - p)}{p} \left(\left(A - B\right)L_{+,w}^2 + B L_{\pm,w}^2\right)}\right)^{-1}.
	$$
  Next, using Assumption~\ref{ass:pl} and $\gamma \leq \frac{p}{2\mu}$, we have
  \begin{equation*}
		\begin{aligned}
			&\Exp{f(x^{t+1})-f^{*}+\frac{\gamma}{p}\normsq{g^{t+1}-\nabla f(x^{t+1})}}\\
			&\leq \left(1 - \gamma \mu\right) \Exp{ f\left(x^{t}\right)-f^{*} + \frac{\gamma}{p}\left\|g^{t}-\nabla f\left(x^{t}\right)\right\|^{2}}.
		\end{aligned}
	\end{equation*}
  Unrolling the recursion and considering that $g^0 = \nabla f(x^0),$ we can complete the proof of theorem.
\end{proof}

\section{Derivations of the Parameters for the Samplings}

\subsection{\samplingname{Nice} sampling}\label{subsec:0}
Let $S$ be a random subset uniformly chosen from $[n]$ with a fixed cardinality $\tau$. 
Let us fix $a_1, \dots, a_n \in \R^d.$  A sampling $\samplefunc(a_1, \dots, a_n) \eqdef \frac{1}{n}\sum_{i \in S} \frac{a_i}{p_i}$ is called the \samplingname{Nice} sampling, where $p_i \eqdef \Prob(i \in S).$ 

Let us bound $\Exp{\norm{\samplefunc(a_1, \dots, a_n) - \frac{1}{n}\sum_{i = 1}^n a_i}^2}$ and find parameters from Assumption~\ref{ass:sampling}. Note that $|\samplefunc| = |S| = \tau.$
We introduce auxiliary random variables
\begin{equation*}
	\chi_i\eqdef\begin{cases}
	1&i\in S\\
	0&\text{otherwise.}
	\end{cases}.
\end{equation*}
Due to $p_i = \Prob\left(i \in S\right) = \frac{\tau}{n},$ we have
\begin{equation*}
	\begin{aligned}
		\Exp{\norm{\frac{1}{n}\sum_{i \in S} \frac{a_i}{p_i}}^2}&=\Exp{\normsq{\frac{1}{\tau}\sum_{i=1}^n\chi_i a_i}}\\
		&=\frac{1}{\tau^2}\sum_{i=1}^n\Exp{\normsq{\chi_ia_i}}+\frac{1}{\tau^2}\sum_{i\neq j}\Exp{\inner{\chi_ia_i}{\chi_ja_j}}\\
		&=\frac{1}{\tau^2}\sum_{i=1}^n\Exp{\chi_i}\normsq{a_i}+\frac{1}{\tau^2}\sum_{i\neq j}\Exp{\inner{\chi_i}{\chi_j}}\inner{a_i}{a_j}\\
		&=\frac{1}{n\tau}\sum_{i=1}^n\normsq{a_i}+\frac{\tau-1}{n(n-1)\tau}\sum_{i\neq j}\inner{a_i}{a_j}\\
		&=\frac{1}{n\tau}\sum_{i=1}^n\normsq{a_i}+\frac{\tau-1}{n(n-1)\tau}\left(\normsq{\sum_{i=1}^na_i}-\sum_{i=1}^n\normsq{a_i}\right)\\
		&=\frac{n-\tau}{\tau(n-1)}\frac{1}{n}\sum_{i=1}^n\normsq{a_i}+\frac{\tau-1}{n(n-1)\tau}\normsq{\sum_{i=1}^na_i},
	\end{aligned}
\end{equation*}
where we use $\Exp{\chi_i^2}=\Exp{\chi_i}=\frac{\tau}{n}$ and $\Exp{\chi_i\chi_j}=\frac{\tau(\tau-1)}{n(n-1)},$ when $i\neq j$.

Finally, we have 
\begin{equation*}
	\begin{aligned}
		\Exp{\norm{\frac{1}{n}\sum_{i \in S} \frac{a_i}{p_i} - \frac{1}{n}\sum_{i = 1}^n a_i}^2}&=\Exp{\norm{\frac{1}{n}\sum_{i \in S} \frac{a_i}{p_i}}^2} - \norm{\frac{1}{n}\sum_{i = 1}^n a_i}^2\\
		&=\frac{n-\tau}{\tau(n-1)}\frac{1}{n}\sum_{i=1}^n\normsq{a_i}+\frac{\tau-1}{n(n-1)\tau}\normsq{\sum_{i=1}^na_i}-\normsq{\frac{1}{n}\sum_{i=1}^na_i}\\
		&=\frac{n-\tau}{\tau(n-1)}\left(\frac{1}{n}\sum_{i=1}^n\normsq{a_i}-\normsq{\frac{1}{n}\sum_{i=1}^na_i}\right).
	\end{aligned}
\end{equation*}
Thus we have $A=B=\frac{n-\tau}{\tau(n-1)}$ and $w_i=\frac{1}{n}$ for all $i \in [n].$

\subsection{\samplingname{Independent} sampling}\label{subsec:2}
Let us define i.i.d. random variables
\begin{equation*}
	\chi_i=\begin{cases}
		1&\text{with probability} \ p_i\\
		0&\text{with probability} \ 1-p_i,\\
	\end{cases}.
\end{equation*}
for all $i \in [n]$ and take $S\eqdef\{i \in [n] \,|\, \chi_i = 1\}.$ We now fix $a_1, \dots, a_n \in \R^d.$ A sampling $\samplefunc(a_1, \dots, a_n) \eqdef \frac{1}{n}\sum_{i \in S} \frac{a_i}{p_i}$ is called the \samplingname{Independent} sampling, where $p_i \eqdef \Prob(i \in S).$ 

We get
\begin{equation*}\label{eq:ss}
	\begin{aligned}
		\Exp{\norm{\frac{1}{n}\sum_{i \in S} \frac{a_i}{p_i} - \frac{1}{n}\sum_{i = 1}^n a_i}^2}&=\Exp{\normsq{\frac{1}{n}\sum_{i=1}^n\frac{1}{p_i}\chi_ia_i}}-\normsq{\frac{1}{n}\sum_{i=1}^na_i}\\
    &=\sum_{i=1}^n\frac{\Exp{\chi_i}}{n^2p_i^2}\normsq{a_i}+ \sum_{i \neq j} \frac{\Exp{\chi_i} \Exp{\chi_j}}{n^2 p_i p_j} \inp{a_i}{a_j} -\normsq{\frac{1}{n}\sum_{i=1}^na_i}\\
		&=\sum_{i=1}^n\frac{1}{n^2p_i}\normsq{a_i}+\frac{1}{n^2}\left(\normsq{\sum_{i=1}^na_i}-\sum_{i=1}^n\normsq{a_i}\right)-\normsq{\frac{1}{n}\sum_{i=1}^na_i}\\
		&=\frac{1}{n^2}\sum_{i=1}^n\left(\frac{1}{p_i}-1\right)\normsq{a_i}.
	\end{aligned}
\end{equation*}
Thus we have $A=\frac{1}{\sum_{i=1}^n\frac{p_i}{1-p_i}},$ $B=0$  and $w_i=\frac{\frac{p_i}{1-p_i}}{\sum_{i=1}^n\frac{p_i}{1-p_i}}$ for all $i \in [n].$

\subsection{\samplingname{Importance} and \samplingname{Uniform With Replacement} sampling}\label{subsec:3}
Let us fix $\tau > 0.$ For all $k \in [\tau],$ we define i.i.d. random variables 
\begin{equation*}
	\chi_{k}=\begin{cases}
		1& \text{with probability} \ q_1\\
		2&\text{with probability} \ q_2\\
		\quad &\vdots\\
		n&\text{with probability}\ q_n,
	\end{cases} 
\end{equation*}
where $(q_1, \dots, q_n) \in \mathcal{S}^n$ (simple simplex). 
A sampling 
\begin{equation*}
	\samplefunc(a_1, \dots, a_n) \eqdef \frac{1}{\tau} \sum_{k=1}^{\tau} \frac{a_{\chi_{k}}}{n q_{\chi_{k}}}
\end{equation*} is called the \samplingname{Importance} sampling.
The \samplingname{Importance} sampling reduces to the \samplingname{Uniform With Replacement} sampling when $q_i = \nicefrac{1}{n}$ for all $i \in [n].$ Note that $|\samplefunc| \leq \tau.$ 

Let us bound the variance
\begin{align*}
  &\Exp{\norm{\frac{1}{\tau} \sum_{k=1}^{\tau} \frac{a_{\chi_{k}}}{n q_{\chi_{k}}} - \frac{1}{n}\sum_{i=1}^n a_i}^2} \\
  &= \frac{1}{\tau^2}\sum_{k=1}^{\tau} \Exp{\norm{\frac{a_{\chi_{k}}}{n q_{\chi_{k}}} - \frac{1}{n}\sum_{i=1}^n a_i}^2} + \frac{1}{\tau^2}\sum_{k\neq k'} \Exp{\inp{\frac{a_{\chi_{k}}}{n q_{\chi_{k}}} - \frac{1}{n}\sum_{i=1}^n a_i}{\frac{a_{\chi_{k'}}}{n q_{\chi_{k'}}} - \frac{1}{n}\sum_{i=1}^n a_i}}.
\end{align*}
Using the independents and unbiasedness of the random variables, the last term vanishes and we get
\begin{align*}
  \Exp{\norm{\frac{1}{\tau} \sum_{k=1}^{\tau} \frac{a_{\chi_{k}}}{n q_{\chi_{k}}} - \frac{1}{n}\sum_{i=1}^n a_i}^2} &= \frac{1}{\tau^2}\sum_{k=1}^{\tau} \Exp{\norm{\frac{a_{\chi_{k}}}{n q_{\chi_{k}}} - \frac{1}{n}\sum_{i=1}^n a_i}^2} \\
  &\overset{\eqref{auxiliary:variance_decomposition}}{=} \frac{1}{\tau^2}\sum_{k=1}^{\tau} \Exp{\norm{\frac{a_{\chi_{k}}}{n q_{\chi_{k}}}}^2} - \frac{1}{\tau} \norm{\frac{1}{n}\sum_{i=1}^n a_i}^2 \\
  &= \frac{1}{\tau} \sum_{i=1}^n q_i \norm{\frac{a_{i}}{n q_{i}}}^2 - \frac{1}{\tau} \norm{\frac{1}{n}\sum_{i=1}^n a_i}^2\\
  &= \frac{1}{\tau} \left(\frac{1}{n}\sum_{i=1}^n \frac{1}{n q_i} \norm{a_{i}}^2 - \norm{\frac{1}{n}\sum_{i=1}^n a_i}^2\right).
\end{align*}
Thus we have $A=B=\frac{1}{\tau},$ and $w_i=q_i$ for all $i \in [n].$

\subsection{\samplingname{Extended Nice} sampling}\label{subsec:1}
In this section, we analyze the extension of \samplingname{Nice} sampling. First, we $l_i$ times repeat each vector $a_i$, then we use the \samplingname{Nice} sampling. We define

\begin{equation*}
	\tilde{a}_i\eqdef\begin{cases}
		\frac{\sum_{j=1}^nl_j}{nl_1}a_1&1\leq i\leq l_1\\
		\frac{\sum_{j=1}^nl_j}{nl_2}a_2&l_1+1\leq i\leq l_1+l_2\\
		\quad&\vdots\\
		\frac{\sum_{j=1}^nl_j}{nl_n}a_n&\sum_{j=1}^{n-1}l_j\leq i\leq \sum_{j=1}^nl_j,\\
	\end{cases},
\end{equation*}
where $a_i \in \R^d$ and $l_i \geq 1$ for all $i \in [n].$
Then we have
\begin{equation*}
	\frac{1}{n}\sum_{i=1}^n a_i(x)=\frac{1}{N}\sum_{i=1}^{N}\tilde{a}_i(x),
\end{equation*}
where  $N\eqdef\sum_{j=1}^nl_j$. Also, we denote $N_k \eqdef \sum_{j=1}^kl_j.$

For some $\tau > 0,$ we apply the \samplingname{Nice} sampling method:
\begin{equation*}
	\samplefunc(a_1, \dots, a_n) \eqdef \frac{1}{N}\sum_{i \in S} \frac{\tilde{a}_i}{p_i} = \sum_{i=1}^N\frac{1}{\tau}\chi_i\tilde{a}_i,
\end{equation*}
where \begin{equation*}
	\chi_i=\begin{cases}
		1&i\in S\\
		0&\text{otherwise}
	\end{cases},\quad p_i = \Prob\left(i \in S\right),
\end{equation*}
and $S$ is a random set with cardinality $\tau$ from $[N]$. The sampling $\samplefunc(a_1, \dots, a_n)$ is called the \samplingname{Extended Nice} sampling.

We now ready to bound the variance. Using the results for the \samplingname{Nice} sampling, we obtain
\begin{align*}
		&\Exp{\norm{\samplefunc(a_1, \dots, a_n) - \frac{1}{n}\sum_{i=1}^n a_i(x)}^2}\\
    &=\Exp{\norm{\samplefunc(a_1, \dots, a_n) - \frac{1}{N}\sum_{i=1}^{N}\tilde{a}_i(x)}^2}\\
    &=\frac{n-\tau}{\tau(n-1)}\frac{1}{N}\sum_{i=1}^N\normsq{\tilde{a}_i}-\frac{n-\tau}{\tau(n-1)}\normsq{\frac{1}{N}\sum_{i=1}^N\tilde{a}_i}\\
		&=\frac{n-\tau}{\tau(n-1)}\left(\frac{1}{N}\left(\frac{N}{nl_1}\right)^2\sum_{i=1}^{N_1}\normsq{{a}_1}+\frac{1}{N}\left(\frac{N}{nl_2}\right)^2\sum_{i=N_1+1}^{N_2}\normsq{{a}_2}\right.\\
		&\left.\quad+\cdots+\frac{1}{N}\left(\frac{N}{nl_n}\right)^2\sum_{i=N_{n-1}+1}^{N}\normsq{{a}_n}\right)-\frac{n-\tau}{\tau(n-1)}\normsq{\frac{1}{n}\sum_{i=1}^na_i}\\
		&={\frac{n-\tau}{\tau(n-1)}{\left(\frac{N}{nl_1}\frac{1}{n}\normsq{a_1}+\frac{N}{nl_2}\frac{1}{n}\normsq{a_2}+\cdots+\frac{N}{nl_n}\frac{1}{n}\normsq{a_n}\right)}-\frac{n-\tau}{\tau(n-1)}\normsq{\frac{1}{n}\sum_{i=1}^na_i}}\\
		&{=\frac{n-\tau}{\tau(n-1)}\left(\sum_{i=1}^n\frac{1}{n^2w_i}\normsq{a_i}\right)-\frac{n-\tau}{\tau(n-1)}\normsq{\frac{1}{n}\sum_{i=1}^na_i}}
\end{align*}
where $w_i=\frac{l_i}{N}$. Thus we have $A=B = \frac{n-\tau}{\tau(n-1)}$ and  $w_i=\frac{l_i}{N}$ for $i \in [n].$

\section{The Optimal Choice of $w_i$}
\label{sec:optimal_weights}
Let us consider $L_{+,w}^2$ and $L_{\pm,w}^2.$ In Sec~\ref{sec:assumptions}, we show that one can take $L_{+,w}^2 = L_{\pm,w}^2 = \frac{1}{n} \sum_{i=1}^n \frac{1}{n w_i} L_i^2.$ Let us minimize $\frac{1}{n} \sum_{i=1}^n \frac{1}{n w_i} L_i^2$ with respect to the weights $w_i$ such that $w_1, \dots, w_n \geq 0$ and $\sum_{i=1}^n w_i = 1$. Using the method of Lagrange multipliers, we can construct a Lagrangian
\begin{equation*}
	\cL(w,\lambda)\eqdef\frac{1}{n} \sum_{i=1}^n \frac{1}{n w_i} L_i^2-\lambda \left(\sum_{i=1}^nw_i-1\right).
\end{equation*}
Next, we calculate partial derivatives
\begin{align*}
	&\frac{\partial\cL}{\partial w_i}=-\frac{1}{n^2w_i^2}L_i^2-\lambda=0 \forall i \in [n]
\end{align*}
and get
\begin{equation*}
	w_i^2=-\frac{L_i^2}{n^2\lambda}.
\end{equation*}
Using $\sum_{i=1}^n w_i = 1,$ we can show that the weights $w_i^*=\frac{L_i}{\sum_{i=1}^nL_i}$ are the solutions of the minimization problem. Moreover, 
\begin{equation*}
\frac{1}{n} \sum_{i=1}^n \frac{1}{n w_i^*} L_i^2 = \left(\frac{1}{n}\sum_{i=1}^nL_i\right)^2.
\end{equation*}


\section{Missing Proofs: The Composition of Samplings}

\begin{restatable}{lemma}{LEMMASAMPLINGINSAMPLING}
  \label{lemma:sampling}
  Let us assume that a random sampling function $\samplefunc$ satisfies Assumption~\ref{ass:sampling} with some $A, B$ and weights $w_i,$ and a random sampling function $\samplefunc_i$ satisfy Assumption~\ref{ass:sampling} with some $A_i, B_i$ and weights $w_{ij}$ for all $i \in [n].$ Moreover, $B \leq 1.$ Then
  \begin{align*}
    &\Exp{\norm{\samplefunc\left(\samplefunc_1\left(a_{11}, \dots, a_{1 m_1}\right), \dots, \samplefunc_n\left(a_{n1}, \dots, a_{n m_n}\right)\right) - \frac{1}{n}\sum_{i = 1}^n \left(\frac{1}{m_i} \sum_{j=1}^{m_i} a_{ij}\right)}^2} \\
    &\leq \frac{1}{n}\sum_{i = 1}^n \left(\frac{A}{n w_i} + \frac{(1 - B)}{n}\right)\left(\frac{A_i}{m_i}\sum_{j = 1}^{m_i} \frac{1}{m_i w_{ij}}\norm{a_{ij}}^2 - B_i \norm{\frac{1}{m_i} \sum_{j=1}^{m_i} a_{ij}}^2\right) \\
    &\quad + \frac{A}{n}\sum_{i = 1}^n \frac{1}{n w_i}\norm{\frac{1}{m_i} \sum_{j=1}^{m_i} a_{ij}}^2 - B \norm{\frac{1}{n}\sum_{i = 1}^n \left(\frac{1}{m_i} \sum_{j=1}^{m_i} a_{ij}\right)}^2,
  \end{align*}
  where $a_{ij} \in \R^d$ for all $j \in [m_i]$ and $i \in [n].$
\end{restatable}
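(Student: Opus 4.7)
Let $\bar a_i \eqdef \frac{1}{m_i}\sum_{j=1}^{m_i} a_{ij}$ and write $h_i \eqdef \samplefunc_i(a_{i1},\dots,a_{im_i})$, so $\Exp{h_i} = \bar a_i$ by unbiasedness and the inner samplings $\samplefunc_1,\dots,\samplefunc_n$ are independent (since each acts on a disjoint block of coordinates through its own randomness). The plan is to condition on $(h_1,\dots,h_n)$, apply Assumption~\ref{ass:sampling} to the outer sampling, then integrate out the inner samplings using Assumption~\ref{ass:sampling} for each $\samplefunc_i$.

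\textbf{Step 1 (conditional decomposition).} Conditional on $h=(h_1,\dots,h_n)$, the outer sampling is unbiased with mean $\frac{1}{n}\sum_i h_i$, so by the standard variance decomposition \eqref{auxiliary:variance_decomposition},
\begin{align*}
\ExpCond{\norm{\samplefunc(h)-\tfrac{1}{n}\sum_i \bar a_i}^2}{h}
= \ExpCond{\norm{\samplefunc(h)-\tfrac{1}{n}\sum_i h_i}^2}{h} + \norm{\tfrac{1}{n}\sum_i h_i - \tfrac{1}{n}\sum_i \bar a_i}^2.
\end{align*}
Apply Assumption~\ref{ass:sampling} to the first term to obtain the pointwise bound
$\ExpCond{\norm{\samplefunc(h)-\tfrac{1}{n}\sum_i h_i}^2}{h}\le \frac{A}{n}\sum_i \frac{1}{n w_i}\norm{h_i}^2 - B\norm{\frac{1}{n}\sum_i h_i}^2$.

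\textbf{Step 2 (take expectation over $h$).} Taking $\Exp{\cdot}$ and using the independence of the $\samplefunc_i$'s, the mean-square of the inner-sampling average splits as
\begin{align*}
\Exp{\norm{\tfrac{1}{n}\sum_i h_i}^2} = \norm{\tfrac{1}{n}\sum_i \bar a_i}^2 + \frac{1}{n^2}\sum_i \Exp{\norm{h_i-\bar a_i}^2},
\end{align*}
and similarly $\Exp{\norm{\tfrac{1}{n}\sum_i h_i - \tfrac{1}{n}\sum_i \bar a_i}^2} = \frac{1}{n^2}\sum_i \Exp{\norm{h_i-\bar a_i}^2}$. Using $\Exp{\norm{h_i}^2} = \Exp{\norm{h_i-\bar a_i}^2} + \norm{\bar a_i}^2$, collect all occurrences of $\Exp{\norm{h_i-\bar a_i}^2}$ in a single sum. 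After grouping, the coefficient of $\Exp{\norm{h_i-\bar a_i}^2}$ is exactly $\frac{1}{n}\left(\frac{A}{nw_i}+\frac{1-B}{n}\right)$, and the remaining terms form precisely $\frac{A}{n}\sum_i \frac{1}{n w_i}\norm{\bar a_i}^2 - B\norm{\tfrac{1}{n}\sum_i \bar a_i}^2$.

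\textbf{Step 3 (apply Assumption~\ref{ass:sampling} to each $\samplefunc_i$).} Since $B\le 1$, the coefficient $\frac{A}{nw_i}+\frac{1-B}{n}$ is nonnegative, so we may upper bound $\Exp{\norm{h_i-\bar a_i}^2}\le \frac{A_i}{m_i}\sum_j \frac{1}{m_iw_{ij}}\norm{a_{ij}}^2 - B_i\norm{\bar a_i}^2$ without flipping the inequality. Substituting yields exactly the right-hand side of the claim.

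\textbf{Main obstacle.} The only real subtlety is accounting bookkeeping: one must carefully track the three sources of the $\Exp{\norm{h_i-\bar a_i}^2}$ terms (from $\Exp{\norm{h_i}^2}$, from $\Exp{\norm{\frac{1}{n}\sum h_i}^2}$, and from the bias-variance term) and verify that they add up with coefficient $\frac{A}{n^2 w_i}+\frac{1-B}{n^2}$. The role of the hypothesis $B\le 1$ is precisely to keep this combined coefficient nonnegative, enabling the final application of Assumption~\ref{ass:sampling} to each inner sampling in Step~3.
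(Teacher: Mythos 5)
Your proposal is correct and follows essentially the same route as the paper's proof: condition on the inner samplings, apply the $AB$ inequality to the outer sampling, use the variance decomposition \eqref{auxiliary:variance_decomposition} together with independence of the $\samplefunc_i$ to collect the coefficient $\frac{A}{n^2 w_i}+\frac{1-B}{n^2}$ on each $\Exp{\norm{h_i-\bar a_i}^2}$, and finally apply the $AB$ inequality to each inner sampling, with $B\le 1$ guaranteeing the coefficient is nonnegative. The bookkeeping you describe matches the paper's computation exactly.
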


\begin{proof}
  We denote $\widehat{a}_i \eqdef \samplefunc_i\left(a_{i1}, \dots, a_{i m_i}\right)$ and $a_i \eqdef \frac{1}{m_i}\sum_{j=1}^{m_i} a_{ij}.$ Using \eqref{auxiliary:variance_decomposition}, we have
  \begin{align*}
    &\Exp{\norm{\samplefunc\left(\widehat{a}_1, \dots, \widehat{a}_n\right) - \frac{1}{n}\sum_{i = 1}^n a_i}^2} \\
    &=\Exp{\ExpSub{S}{\norm{\samplefunc\left(\widehat{a}_1, \dots, \widehat{a}_n\right) - \frac{1}{n}\sum_{i = 1}^n a_i}^2}} \\
    &=\Exp{\ExpSub{S}{\norm{\samplefunc\left(\widehat{a}_1, \dots, \widehat{a}_n\right) - \frac{1}{n}\sum_{i = 1}^n \widehat{a}_i}^2}} + \Exp{\norm{\frac{1}{n}\sum_{i = 1}^n \widehat{a}_i - \frac{1}{n}\sum_{i = 1}^n a_i}^2}.
  \end{align*}
  Next, using Assumption~\ref{ass:sampling} for the sampling $\samplefunc,$ we get
  \begin{align*}
    &\Exp{\norm{\samplefunc\left(\widehat{a}_1, \dots, \widehat{a}_n\right) - \frac{1}{n}\sum_{i = 1}^n a_i}^2} \\
    &\leq A \frac{1}{n}\sum_{i = 1}^n \frac{1}{n w_i}\Exp{\norm{\widehat{a}_i}^2} - B \Exp{\norm{\frac{1}{n}\sum_{i = 1}^n \widehat{a}_i}^2} \\
    &\quad + \Exp{\norm{\frac{1}{n}\sum_{i = 1}^n \widehat{a}_i - \frac{1}{n}\sum_{i = 1}^n a_i}^2}.
  \end{align*}
  Due to \eqref{auxiliary:variance_decomposition}, we obtain
  \begin{align*}
    &\Exp{\norm{\samplefunc\left(\widehat{a}_1, \dots, \widehat{a}_n\right) - \frac{1}{n}\sum_{i = 1}^n a_i}^2} \\
    &\leq A \frac{1}{n}\sum_{i = 1}^n \frac{1}{n w_i}\Exp{\norm{\widehat{a}_i - a_i}^2} + A \frac{1}{n}\sum_{i = 1}^n \frac{1}{n w_i}\norm{a_i}^2 \\
    &\quad - B \Exp{\norm{\frac{1}{n}\sum_{i = 1}^n \widehat{a}_i - \frac{1}{n}\sum_{i = 1}^n a_i}^2} - B \norm{\frac{1}{n}\sum_{i = 1}^n a_i}^2  \\
    &\quad + \Exp{\norm{\frac{1}{n}\sum_{i = 1}^n \widehat{a}_i - \frac{1}{n}\sum_{i = 1}^n a_i}^2} \\
    &= A \frac{1}{n}\sum_{i = 1}^n \frac{1}{n w_i}\Exp{\norm{\widehat{a}_i - a_i}^2} + A \frac{1}{n}\sum_{i = 1}^n \frac{1}{n w_i}\norm{a_i}^2 \\
    &\quad + (1 - B) \Exp{\norm{\frac{1}{n}\sum_{i = 1}^n \widehat{a}_i - \frac{1}{n}\sum_{i = 1}^n a_i}^2} - B \norm{\frac{1}{n}\sum_{i = 1}^n a_i}^2 \\
    &= A \frac{1}{n}\sum_{i = 1}^n \frac{1}{n w_i}\Exp{\norm{\widehat{a}_i - a_i}^2} + A \frac{1}{n}\sum_{i = 1}^n \frac{1}{n w_i}\norm{a_i}^2 \\
    &\quad + \frac{(1 - B)}{n^2} \sum_{i=1}^n \Exp{\norm{\widehat{a}_i - a_i}^2} - B \norm{\frac{1}{n}\sum_{i = 1}^n a_i}^2 \\
    &= \frac{1}{n}\sum_{i = 1}^n \left(\frac{A}{n w_i} + \frac{(1 - B)}{n}\right)\Exp{\norm{\widehat{a}_i - a_i}^2} + A \frac{1}{n}\sum_{i = 1}^n \frac{1}{n w_i}\norm{a_i}^2 - B \norm{\frac{1}{n}\sum_{i = 1}^n a_i}^2. 
  \end{align*}
  Using Assumption~\ref{ass:sampling} for the samplings $\samplefunc_i,$ we have
  \begin{align*}
    &\Exp{\norm{\samplefunc\left(\widehat{a}_1, \dots, \widehat{a}_n\right) - \frac{1}{n}\sum_{i = 1}^n a_i}^2} \\
    &\leq \frac{1}{n}\sum_{i = 1}^n \left(\frac{A}{n w_i} + \frac{(1 - B)}{n}\right)\left(A_i \frac{1}{m_i}\sum_{j = 1}^{m_i} \frac{1}{m_i w_{ij}}\norm{a_{ij}}^2 - B_i \norm{a_i}^2\right) \\
    &\quad + A \frac{1}{n}\sum_{i = 1}^n \frac{1}{n w_i}\norm{a_i}^2 - B \norm{\frac{1}{n}\sum_{i = 1}^n a_i}^2.
  \end{align*}
\end{proof}

\THEOREMCONVERGENCEPAGECOMPOSITION*

\begin{proof}
  We start with the estimation of the variance of the noise:
	\begin{align*}
		&\Exp{\norm{g^{t+1} - \nabla f(x^{t+1})}^2} \\
		&= (1 - p) \Exp{\norm{g^t + \samplefunc\left(\left\{\samplefunc_i\left(\{\nabla f_{ij}(x^{t+1}) - \nabla f_{ij}(x^{t})\}_{j=1}^{m_i}\right)\right\}_{i=1}^n\right) - \nabla f(x^{t+1})}^2} \\
		&= (1 - p) \norm{\samplefunc\left(\left\{\samplefunc_i\left(\{\nabla f_{ij}(x^{t+1}) - \nabla f_{ij}(x^{t})\}_{j=1}^{m_i}\right)\right\}_{i=1}^n\right) - \left(\nabla f(x^{t+1}) - \nabla f(x^{t})\right)}^2 + (1 - p) \norm{g^t - \nabla f(x^{t})}^2,
	\end{align*}
  where we used the unbiasedness of the composition of samplings.
	Using Lemma~\ref{lemma:sampling}, we have
	\begin{align*}
		&\Exp{\norm{g^{t+1} - \nabla f(x^{t+1})}^2} \\
		&\leq (1 - p) \left(\frac{1}{n}\sum_{i = 1}^n \left(\frac{A}{n w_i} + \frac{(1 - B)}{n}\right)\left(\frac{A_i}{m_i}\sum_{j = 1}^{m_i} \frac{1}{m_i w_{ij}}\norm{\nabla f_{ij}(x^{t+1}) - \nabla f_{ij}(x^{t})}^2 - B_i \norm{\nabla f_{i}(x^{t+1}) - \nabla f_{i}(x^{t})}^2\right) \right.\\
    &\left.\quad + \frac{A}{n}\sum_{i = 1}^n \frac{1}{n w_i}\norm{\nabla f_{i}(x^{t+1}) - \nabla f_{i}(x^{t})}^2 - B \norm{\nabla f(x^{t+1}) - \nabla f(x^{t})}^2\right)\\
		&\quad + (1 - p) \norm{g^t - \nabla f(x^{t})}^2.
	\end{align*}
	Using Definitions~\ref{def:weighted_local_lipschitz_constant}, \ref{def:weighted_hessian_varaince}, \ref{def:weighted_local_lipschitz_constant:local} and \ref{def:weighted_hessian_varaince:local}, we get
	\begin{align*}
			&\Exp{\norm{g^{t+1} - \nabla f(x^{t+1})}^2} \\
			&\leq (1 - p) \left(\frac{1}{n}\sum_{i = 1}^n \left(\frac{A}{n w_i} + \frac{(1 - B)}{n}\right)\left(\frac{A_i - B_i}{m_i}\sum_{j = 1}^{m_i} \frac{1}{m_i w_{ij}}\norm{\nabla f_{ij}(x^{t+1}) - \nabla f_{ij}(x^{t})}^2 \right.\right.\\
      &\left.\left.+\quad B_i \left(\frac{1}{m_i}\sum_{j = 1}^{m_i} \frac{1}{m_i w_{ij}}\norm{\nabla f_{ij}(x^{t+1}) - \nabla f_{ij}(x^{t})}^2 - \norm{\nabla f_{i}(x^{t+1}) - \nabla f_{i}(x^{t})}^2\right)\right) \right.\\
      &\left.\quad + \frac{A - B}{n}\sum_{i = 1}^n \frac{1}{n w_i}\norm{\nabla f_{i}(x^{t+1}) - \nabla f_{i}(x^{t})}^2 \right.\\
      &\left.\quad+ B\left(\frac{1}{n}\sum_{i = 1}^n \frac{1}{n w_i}\norm{\nabla f_{i}(x^{t+1}) - \nabla f_{i}(x^{t})}^2 - \norm{\nabla f(x^{t+1}) - \nabla f(x^{t})}^2\right)\right) \\
      &\quad + (1 - p) \norm{g^t - \nabla f(x^{t})}^2 \\
      &\leq (1 - p) \left(\frac{1}{n}\sum_{i = 1}^n \left(\frac{A}{n w_i} + \frac{(1 - B)}{n}\right)\left((A_i - B_i) L_{i,+,w_i}^2 + B_i L_{i,\pm,w_i}^2\right) + (A - B)L_{+,w}^2 + B L_{\pm,w}^2\right) \norm{x^{t+1} - x^t}^2 \\
      &\quad + (1 - p) \norm{g^t - \nabla f(x^{t})}^2.
  \end{align*}
  From this point the proof of theorem repeats the proof of Thm~\ref{theorem:pageab} with $$\frac{1}{n}\sum_{i = 1}^n \left(\frac{A}{n w_i} + \frac{(1 - B)}{n}\right)\left((A_i - B_i) L_{i,+,w_i}^2 + B_i L_{i,\pm,w_i}^2\right) + (A - B)L_{+,w}^2 + B L_{\pm,w}^2$$ instead of $$\left(A - B\right) L_{+,w}^2 + B L_{\pm,w}^2.$$
\end{proof}

\newpage
\section{Artificial Quadratic Optimization Tasks}

In this section, we provide algorithms that we use to generate artificial optimization tasks for experiments. Algorithm~\ref{algorithm:matrix_generation} and Algorithm~\ref{algorithm:matrix_generation_l_i} allow us to control the smoothness constants $L_{\pm}$ and $L_{i},$ accordingly, via the noise scales.
\begin{algorithm}
  \caption{Generate quadratic optimization task with controlled $L_{\pm}$ (homogeneity)}
  \begin{algorithmic}[1]
  \label{algorithm:matrix_generation}
  \STATE \textbf{Parameters:} number nodes $n$, dimension $d$, regularizer $\lambda$, and noise scale $s$.
  \FOR{$i = 1, \dots, n$}
  \STATE Generate random noises $\nu_i^s = 1 + s \xi_i^s$ and $\nu_i^b = s \xi_i^b,$ i.i.d. $\xi_i^s, \xi_i^b \sim \textnormal{NormalDistribution}(0, 1)$
  \STATE Take vector $b_i = \frac{\nu_i^s}{4}(-1 + \nu_i^b, 0, \cdots, 0) \in \R^{d}$
  \STATE Take the initial tridiagonal matrix
  \[\mA_i = \frac{\nu_i^s}{4}\left( \begin{array}{cccc}
    2 & -1 & & 0\\
    -1 & \ddots & \ddots & \\
    & \ddots & \ddots & -1 \\
    0 & & -1 & 2 \end{array} \right) \in \R^{d \times d}\]
  \ENDFOR
  \STATE Take the mean of matrices $\mA = \frac{1}{n}\sum_{i=1}^n \mA_i$
  \STATE Find the minimum eigenvalue $\lambda_{\min}(\mA)$
  \FOR{$i = 1, \dots, n$}
  \STATE Update matrix $\mA_i = \mA_i + (\lambda - \lambda_{\min}(\mA)) \mI$
  \ENDFOR
  \STATE Take starting point $x^0 = (\sqrt{d}, 0, \cdots, 0)$
  \STATE \textbf{Output:} matrices $\mA_1, \cdots, \mA_n$, vectors $b_1, \cdots, b_n$, starting point $x^0$
  \end{algorithmic}
\end{algorithm}

\begin{algorithm}
  \caption{Generate quadratic optimization task with controlled $L_i$}
  \begin{algorithmic}[1]
  \label{algorithm:matrix_generation_l_i}
  \STATE \textbf{Parameters:} number nodes $n$, dimension $d$, regularizer $\lambda$, and noise scale $s$.
  \FOR{$i = 1, \dots, n$}
  \STATE Generate random noises $\nu_i^s = 1 + s \xi_i^s,$ where i.i.d. $\xi_i^s \sim \textnormal{ExponentialDistribution}(1.0)$
  \STATE Generate random noises $\nu_i^b = s \xi_i^b,$ i.i.d. $\xi_i^b \sim \textnormal{NormalDistribution}(0, 1)$
  \STATE Take vector $b_i = (-\frac{1}{4} + \nu_i^b, 0, \cdots, 0) \in \R^{d}$
  \STATE Take the initial tridiagonal matrix
  \[\mA_i = \frac{\nu_i^s}{4}\left( \begin{array}{cccc}
    2 & -1 & & 0\\
    -1 & \ddots & \ddots & \\
    & \ddots & \ddots & -1 \\
    0 & & -1 & 2 \end{array} \right) \in \R^{d \times d}\]
  \ENDFOR
  \STATE Take starting point $x^0 = (\sqrt{d}, 0, \cdots, 0)$
  \STATE \textbf{Output:} matrices $\mA_1, \cdots, \mA_n$, vectors $b_1, \cdots, b_n$, starting point $x^0$
  \end{algorithmic}
\end{algorithm}

\end{document}